\newcommand{\BEAS}{\begin{eqnarray*}}
\newcommand{\EEAS}{\end{eqnarray*}}
\newcommand{\BEA}{\begin{eqnarray}}
\newcommand{\EEA}{\end{eqnarray}}
\newcommand{\BEQ}{\begin{equation}}
\newcommand{\EEQ}{\end{equation}}
\newcommand{\BIT}{\begin{itemize}}
\newcommand{\EIT}{\end{itemize}}
\newcommand{\BNUM}{\begin{enumerate}}
\newcommand{\ENUM}{\end{enumerate}}
\newcommand{\BA}{\begin{array}}
\newcommand{\EA}{\end{array}}
\newcommand{\BC}{\begin{center}}
\newcommand{\EC}{\end{center}}
\newcommand{\eg}{{\it e.g.}}
\newcommand{\diag}{\mathop{\bf diag}}
\newtheorem{theorem}{Theorem}
\newtheorem{proposition}[theorem]{Proposition}
\newtheorem{remark}[theorem]{Remark}
\newcounter{exno}
\long\def\@makecaption#1#2{
   \vskip 9pt
   \begin{small}
   \setbox\@tempboxa\hbox{{\bf #1:} #2}
   \ifdim \wd\@tempboxa > 5.5in
        \begin{center}
        \begin{minipage}[t]{5.5in}
        \addtolength{\baselineskip}{-0.95pt}
        {\bf #1:} #2 \par
        \addtolength{\baselineskip}{0.95pt}
        \end{minipage}
        \end{center}
   \else
    \hbox to\hsize{\hfil\box\@tempboxa\hfil}
   \fi
   \end{small}\par
}
\newcounter{oursection}
\newcounter{lecture}
\def\bX{\mathbf{X}}
\def\bY{\mathbf{Y}}
\def\bx{\mathbf{x}}
\def\bu{\mathbf{u}}
\def\bg{\mathbf{g}}
\newtheorem{definition}{Definition}
\newtheorem{lemma}{Lemma}
\def\E{\mathbb{E}}
\def\RR{\mathbb{R}}
\def\NN{\mathds{N}}
\def\undemi{\frac{1}{2}}
\def\Wcal{\mathcal{W}}
\def\Scal{\mathcal{S}}
\def\Xcal{\mathcal{X}}
\def\Rcal{\mathcal{R}}
\def\Ucal{\mathcal{U}}
\def\Mcal{M_+^b(\Xcal)}
\def\Ncal{\mathcal{N}}
\def\Hcal{\mathcal{H}}
\def\psd{\mbox{\bf S}_n^+}
\def\Kappa{\mathcal{K}}
\def\OMIT#1{}
\DeclareMathOperator{\spa}{span}
\DeclareMathOperator{\defi}{def}
\DeclareMathOperator{\tr}{tr}
\DeclareMathOperator{\defeq}{\overset{\defi}{=}}
\DeclareMathOperator{\var}{var}
\providecommand{\abs}[1]{\lvert#1\rvert} 
\providecommand{\norm}[1]{\lVert#1\rVert} 
\providecommand{\Norm}[1]{\left\lVert#1\right\rVert}
\newif\if@borderstar
\def\bordermatrix{\@ifnextchar*{%
  \@borderstartrue\@bordermatrix@i}{\@borderstarfalse\@bordermatrix@i*}%
}
\def\@bordermatrix@i*{\@ifnextchar[{%
  \@bordermatrix@ii}{\@bordermatrix@ii[()]}
}
\def\@bordermatrix@ii[#1]#2{%
  \begingroup
    \m@th\@tempdima8.75\p@\setbox\z@\vbox{%
      \def\cr{\crcr\noalign{\kern 2\p@\global\let\cr\endline }}%
      \ialign {$##$\hfil\kern 2\p@\kern\@tempdima & \thinspace %
      \hfil $##$\hfil && \quad\hfil $##$\hfil\crcr\omit\strut %
      \hfil\crcr\noalign{\kern -\baselineskip}#2\crcr\omit %
      \strut\cr}}%
    \setbox\tw@\vbox{\unvcopy\z@\global\setbox\@ne\lastbox}%
    \setbox\tw@\hbox{\unhbox\@ne\unskip\global\setbox\@ne\lastbox}%
    \setbox\tw@\hbox{%
      $\kern\wd\@ne\kern -\@tempdima\left\@firstoftwo#1%
        \if@borderstar\kern2pt\else\kern -\wd\@ne\fi%
      \global\setbox\@ne\vbox{\box\@ne\if@borderstar\else\kern 2\p@\fi}%
      \vcenter{\if@borderstar\else\kern -\ht\@ne\fi%
        \unvbox\z@\kern-\if@borderstar2\fi\baselineskip}%
        \if@borderstar\kern-2\@tempdima\kern2\p@\else\,\fi\right\@secondoftwo#1 $%
    }\null \;\vbox{\kern\ht\@ne\box\tw@}%
  \endgroup
}
\def\ve{\varepsilon}
\def\bG{\textbf{G}}
\def\bx{\textbf{x}}
\def\ar{\text{ar}}
\def\bt{\mathbf{t}}
\def\bu{\mathbf{u}}
\def\bK{\mathbf{K}}
\def\E{\mathbb{E}}
\def\bu{\textbf{u}}
\def\RR{\mathbb{R}}
\def\NN{\mathds{N}}
\def\undemi{\frac{1}{2}}
\def\Scal{\mathcal{S}}
\def\Xcal{\mathcal{X}}
\def\Rcal{\mathcal{R}}
\def\Ucal{\mathcal{U}}
\def\Mcal{\mathcal{M}}
\def\Wcal{\mathcal{W}}
\def\Ncal{\mathcal{N}}
\def\Hcal{\mathcal{H}}
\def\Kappa{\mathcal{K}}
\def\kBOV{k_{\text{BoV}}^\kappa}
\def\kGA{k_\text{GA}^\kappa}
\def\kS{k_\text{S}}
\def\OMIT#1{}
\newtheorem{rem}{Remark}
\def\Psdd{\mbox{\bf S}_d^{++}}
\def\PsdN{\mbox{\bf S}_N^{+}}
\begin{document}

\title{Autoregressive Kernels for Time Series}

\author{Marco Cuturi\\Graduate School of Informatics\\Kyoto University\\\texttt{mcuturi@i.kyoto-u.ac.jp} \and
Arnaud Doucet\\Department of Computer Science \& Department of Statistics\\
 University of British Columbia\\  \texttt{arnaud@cs.ubc.ca}}
\maketitle

\begin{abstract}
We propose in this work a new family of kernels for variable-length
time series. Our work builds upon the vector autoregressive (VAR)
model for multivariate stochastic processes: given a multivariate
time series $\bx$, we consider the likelihood function $p_{\theta}(\bx)$
of different parameters $\theta$ in the VAR model as features to
describe $\bx$. To compare two time series $\bx$ and $\bx'$, we
form the product of their features $p_{\theta}(\bx)\cdot p_{\theta}(\bx')$
which is integrated out w.r.t $\theta$ {\normalsize using a }matrix
normal-inverse Wishart prior. Among other properties, this kernel
can be easily computed when the dimension $d$ of the time series
is much larger than the lengths of the considered time series $\bx$
and $\bx'$. It can also be generalized to time series taking values
in arbitrary state spaces, as long as the state space itself is endowed
with a kernel $\kappa$. In that case, the kernel between $\bx$ and
$\bx'$ is a a function of the Gram matrices produced by $\kappa$
on observations and subsequences of observations enumerated in $\bx$
and $\bx'$. We describe a computationally efficient implementation
of this generalization that uses low-rank matrix factorization techniques.
These kernels are compared to other known kernels using a set of benchmark
classification tasks carried out with support vector machines. 
\end{abstract}

\section{Introduction}

Kernel methods~\citep{hofmann2008kernel} have proved useful to handle
and analyze structured data. A non-exhaustive list of such data types
includes images~\citep{chapelle99SVMs,conf/iccv/GraumanD05,cuturi2006kernels,DBLP:conf/cvpr/HarchaouiB07},
graphs~\citep{KasTsuIno03,mahe2005,vishwanathan2008graph,shervashidze2009fast},
texts~\citep{joachims:2002a,moschitti2007fast} and strings on finite
alphabets~\citep{leslie02spectrum,cortes2004rational,VerSaiAku04,cuturi05context,Sonnenburg07largescale},
which have all drawn much attention in recent years. Time series,
although ubiquitous in science and engineering, have been comparatively
the subject of less research in the kernel literature.

Numerous similarity measures and distances for time series have been
proposed in the past decades~\citep{schreiber1997classification}.
These similarities are not, however, always well suited to the kernel
methods framework. First, most available similarity measures are not
positive definite~\citep{haasdonk-learning}. Likewise, most distances
are not negative definite. The positive definiteness of similarity
measures (alternatively the negative definiteness of distances) is
needed to use the convex optimization algorithms that underly most
kernel machines. Positive definiteness is also the cornerstone of
the reproducing kernel Hilbert space (RKHS) framework which supports
these techniques~\citep{berlinet03reproducing}. Second, most similarities
measures are only defined for `standard' multivariate time series,
that is time series of finite dimensional vectors. Yet, some of the
main application fields of kernel methods include bioinformatics,
natural language processing and computer vision, where the analysis
of time series of structured objects (images, texts, graphs) remains
a very promising field of study. Ideally, a useful kernel for time
series should be both positive definite and able to handle time series
of structured data. An oft-quoted example~\citep{bahlmann2002online,nips02-AA20}
of a non-positive definite similarity for time series is the Dynamic
Time Warping (DTW) score~\citep{Sakoe78}, arguably the most popular
similarity score for variable-length multivariate time series~\citep[\S4.7]{rabiner1993fundamentals}.
Hence the DTW can only be used in a kernel machine if it is altered
through ad hoc modifications such as diagonal regularizations~\citep{zhou2010unsupervised}.
Some extensions of the DTW score have addressed this issue:~\citet{hayashi2005embedding}
propose to embed time series in Euclidean spaces such that the distance
of such representations approximates the distance induced by the DTW
score.~\citet{cuturi07kernel} consider the soft-max of the alignment
scores of all possible alignments to compute a positive definite kernel
for two time series. This kernel can also be used on two time series
$\bx=(x_{1},\cdots,x_{n})$ and $\bx'=(x_{1}',\cdots,x_{n'}')$ of
structured objects since the kernel between $\bx$ and $\bx'$ can
be expressed as a function of the Gram matrix $\Kappa=\left[\kappa(x_{i},x'_{j})\right]_{i\leq n,j\leq n'}$
where $\kappa$ is a given kernel on the structured objects of interest.

A few alternative kernels have been proposed for multivariate time
series.~\citet{kumara2008large} consider a non-parametric approach
to interpolate time series using splines, and define directly kernels
on these interpolated representations. In a paragraph of their broad
work on probability product kernels,~\citet[\S4.5]{jebara04probability}
briefly mention the idea of using the Bhattacharyya distance on suitable
representations as normal densities of two time series using state-space
models.~\citet{vishwanathan2007binet} as well as \citet{borgwardt2006class}
use the family of Binet-Cauchy kernels~\citep{vishwanathan2004binet},
originally defined by the coefficients of the characteristic polynomial of kernel matrices
such as the matrix $\Kappa$ described above (when $n=n'$). Unlike other techniques listed above, these two proposals
rely on a probabilistic modeling of the time series to define a kernel.
Namely, in both the probability product and Binet-Cauchy approaches
the kernel value is the result of a two step computation: each time
series is first mapped onto a set of parameters that summarizes their
dynamic behavior; the kernel is then defined as a kernel between these
two sets of parameters.

The kernels we propose in this paper also rely on a probabilistic
modeling of time series to define kernels but do away with the two
step approach detailed above. This distinction is discussed later
in the paper in Remark~\ref{rem:comp}. Our contribution builds upon
the the \emph{covariance kernel} framework proposed by~\citet{seeger02covariance},
and whose approach can be traced back to the work of~\citet{haussler99convolution}
and~\citet{jaakkola99using}, who advocate the use of probabilistic
models to extract features from structured objects. Given a measurable
space $\Xcal$ and a model, that is a parameterized family of probability
distributions on $\Xcal$ of the form $\{p_{\theta},\theta\in\Theta\}$,
a kernel for two objects $\bx,\bx'\in\Xcal$ can be defined as 
\[
k(\bx,\bx')=\int_{\theta\in\Theta}p_{\theta}(\bx)\, p_{\theta}(\bx')\,\omega(d\theta),\]
where $\omega$ is a prior on the parameter space. Our work can be
broadly characterized as the implementation of this idea for time
series data, using the VAR model to define the space of densities
$p_{\theta}$ and a specific prior for $\omega$, the matrix-normal
inverse-Wishart prior. This conjugate prior allows us to obtain a
closed-form expression for the kernel which admits useful properties.

The rest of this paper is organized as follows. Section~\ref{sec:kernelcomput}
starts with a brief review of the Bayesian approach to dynamic linear
modeling for multivariate stochastic processes, which provides the
main tool to define autoregressive kernels on multivariate time series.
We follow by detailing a few of the appealing properties of autoregressive
kernels for multivariate time series, namely their infinite divisibility
and their ability to handle high-dimensional time series of short
length. We show in Section~\ref{sec:kernelization} that autoregressive
kernels can not only be used on multivariate time series but also
on time series taking values in any set endowed with a kernel. The
kernel is then computed as a function of the Gram matrices of subsets
of shorter time series found in $\bx$ and $\bx'$. This computation
requires itself the computation of large Gram matrices in addition
to a large number of operations that grows cubicly with the lengths
of $\bx$ and $\bx'$. We propose in Section~\ref{subsec:bounds}
to circumvent this computational burden by using low-rank matrix factorization
of these Gram matrices. We present in Section~\ref{sec:experiments}
different experimental results using toy and real-life datasets.


\section{Autoregressive Kernels}

\label{sec:kernelcomput} 
We introduce in this section the crux of our contribution, that is
a family of kernels that can handle variable-length multivariate time
series. Sections~\ref{subsec:auto}, \ref{subsec:bayeslinreg} and~\ref{subsec:bayesianVAR}
detail the construction of such kernels, while Sections~\ref{subsec:varandgram}
and~\ref{subsec:infdiv} highlight some of their properties.

\subsection{Autoregressive Kernels as an Instance of Covariance Kernels}

\label{subsec:auto} A vector autoregressive model of order $p$ henceforth
abbreviated as VAR($p$) is a family of densities for $\RR^{d}$-valued
stochastic processes parameterized by $p$ matrices $A_{i}$ of size
$d\times d$ and a positive definite matrix $V$. Given a parameter
$\theta=(A_{1},\cdots,A_{p},V)$ the conditional probability density
that an observed time series $\bx=(x_{1},x_{2},\dots,x_{n})$ has
been drawn from model $\theta$ given the $p$ first observations
$(x_{1},\dots,x_{p})$, assuming $p<n$, is equal to \[
p_{\theta}(\bx|x_{1},\cdots,x_{p})=\frac{1}{(2\pi|V|)^{\frac{d(n-p)}{2}}}\prod_{i=p+1}^{n}\exp\left(-\undemi\left\lVert \, x_{i}-\sum_{i=1}^{p}A_{i}x_{t-i}\,\right\rVert _{V}^{2}\right),\]
 where for a vector $x$ and a positive definite matrix $V$ the Mahalanobis
norm $\norm{x}_{V}^{2}$ is equal to $x^{T}V^{-1}x$. We write $\abs{\bx}$
for the length of the time series $\bx$, $n$ in the example above.
We abbreviate the conditional probability $p_{\theta}(\bx|x_{1},\cdots,x_{p})$
as $p_{\theta}(\bx)$ and take for granted that the $p$ first observations
of the time series are not taken into account to compute the probability
of $\bx$. We consider the set of parameters \[
\Theta=\underbrace{\RR^{d\times d}\times\cdots\times\RR^{d\times d}}_{p}\times\Psdd,\]
 where $\Psdd$ is the cone of positive definite matrices of size
$d\times d$, to define a kernel $k$ that computes a weighted sum
of the product of features of two time series $\bx$ and $\bx'$ as
$\theta$ varies in $\Theta$,

\begin{equation}
k(\bx,\bx')=\int_{\theta\in\Theta}p_{\theta}(\bx)^{c(\abs{\bx})}\, p_{\theta}(\bx')^{c(\abs{\bx'})}\omega(d\theta),\label{eq:renorm}\end{equation}
 where the exponential weight factor $c(\abs{\bx})$ is used to normalize
the probabilities $p_{\theta}(\bx)$ by the length of the considered
time series $\bx$.

\begin{rem} The feature map $\textup{\bx}\rightarrow\left\{ p_{\theta}(\textup{\bx})\right\} _{\theta\in\Theta}$
produces strictly positive features. Features will be naturally closer
to 0 for longer time series. We follow in this work the common practice
in the kernel methods literature, as well as the signal processing
literature, to normalize to some extent these features so that their
magnitude is independent of the size of the input object, namely the
length of the input time series. We propose to do so by normalizing
the probabilities $p_{\theta}(\textup{\bx})$ by the lengths of the
considered series. The weight $c(\abs{\textup{\bx}})$ introduced
in Equation~\ref{eq:renorm} is defined to this effect in section~\ref{subsec:bayesianVAR}.\end{rem}

\begin{rem}\label{rem:comp} The formulation of Equation~\eqref{eq:renorm}
is somehow orthogonal to kernels that map structured objects, in this
case time series, to a single density in a given model and compare
directly these densities using a kernel between densities, such as
probability product kernels~\citep{jebara04probability}, Binet-Cauchy
kernels~\citep{vishwanathan2007binet}, structural kernels~\citep{hein05hilbertian}
or information diffusion kernels~\citep{leb06metric}. Indeed, such
kernels rely first on the estimation of a parameter $\hat{\theta}_{\textup{\bx}}$
in a parameterized model class to summarize the properties of an object
$\textup{\bx}$, and then compare two objects $\textup{\bx}$ and
$\textup{\bx}'$ by using a proxy kernel on $\hat{\theta}_{\textup{\bx}}$
and $\hat{\theta}_{\textup{\bx}'}$. These approaches do require that
the model class, the VAR model in the context of this paper, properly
models the considered objects, namely that $\textup{\bx}$ is well
summarized by $\hat{\theta}_{\textup{\bx}}$. The kernels we propose
do not suffer from this key restriction: the VAR model considered
in this work is never used to infer likely parameters for a time series
$\textup{\bx}$ but is used instead to generate an infinite family
of features.\end{rem}

The kernel $k$ is mainly defined by the prior $\omega$ on the parameter
space. We present in the next section a possible choice for this prior,
the matrix-normal inverse-Wishart prior, which has been extensively
studied in the framework of Bayesian linear regression applied to
dynamic linear models.

\subsection{The Matrix-Normal Inverse-Wishart Prior in the Bayesian Linear Regression
Framework}

\label{subsec:bayeslinreg} Consider the regression model between
a vector of explanatory variables $x$ of $\RR^{m}$, and an output
variable $y\in\RR^{d}$, $y=Ax+\varepsilon,$ where $A$ is a $d\times m$
coefficient matrix and $\varepsilon$ is a centered Gaussian noise
with $d\times d$ covariance matrix $V$. Accordingly, $y$ follows
conditionally to $x$ a normal density \[
y|(x,A,V)\thicksim\Ncal(Ax,V).\]
 Given $n$ pairs of explanatory and response vectors $\left(x_{i},y_{i}\right)_{1\leq i\leq n}$
weighted by $n$ nonnegative coefficients $t=(t_{1},\ldots,t_{n})\geq0$
such that $\sum_{i=1}^{n}t_{i}=1$, the weighted likelihood of this
sample of $n$ observations is defined by the expression \[
\begin{aligned}\rho(Y|X,A,V,\Delta) & =\prod_{i=1}^{n}p(y_{j}|x_{j},A,V)^{t_{j}},\\
 & =\,\frac{1}{|2\pi V|^{1/2}}\exp\left(-\undemi\tr\Delta(Y-AX)^{T}V^{-1}(Y-AX)\right),\\
 & =\frac{1}{|2\pi V|^{1/2}}\exp\left(-\undemi\tr V^{-1}(Y-AX)\Delta(Y-AX)^{T}\right).\end{aligned}
\]
 where the matrices $\Delta,Y$ and $X$ stand respectively for $\diag(t_{1},\ldots,t_{n})\in\RR^{n\times n}$,
$[y_{1},\cdots,y_{n}]\in\RR^{d\times n}$ and $[x_{1},\ldots,x_{n}]\in\RR^{m\times n}$.

The matrix-normal inverse Wishart joint distribution~\citet[\S 16.4.3]{west1997bayesian}
is a natural choice to model the randomness for $(A,V)$. The prior
assumes that the $d\times m$ matrix $A$ is distributed following
a centered matrix-normal density with left variance matrix parameter
$V$ and right variance matrix parameter $\Omega$, \[
p(A)=\Mcal\Ncal(A;0,V,K)=\frac{1}{|2\pi V|^{m/2}|\Omega|^{d/2}}\exp\left(-\undemi\tr A^{T}V^{-1}A\Omega^{-1}\right)\]
 where $\Omega$ is a $m\times m$ positive definite matrix. Using
the following notations, \[
\BA{cc}S_{xx}=X\Delta X^{T}+\Omega^{-1},\quad S_{yx}=Y\Delta X^{T},S_{yy}=Y\Delta Y^{T},\quad S_{y|x}=S_{yy}-S_{yx}S_{xx}^{-1}S_{yx}^{T}.\EA\]
 we can integrate out $A$ in $\rho(Y|X,A,V,\Delta)$ to obtain \begin{equation}
\rho(Y|X,V)=\frac{1}{|\Omega|^{d/2}|S_{xx}|^{d/2}|2\pi V|^{1/2}}\exp\left(-\undemi\tr(V^{-1}S_{y|x})\right).\label{eq:yxv}\end{equation}
 The matrix-normal inverse Wishart prior for $(A,V)$ also assumes
that $V$ is distributed with inverse-Wishart density $\Wcal_{\lambda}^{-1}(\Sigma)$
of inverse scale matrix $\Sigma$ and degrees of freedom $\lambda>0$.
The posterior obtained by multiplying this prior by Equation~\eqref{eq:yxv}
is itself proportional to an inverse-Wishart density with parameters
$\Wcal^{-1}(\Sigma+S_{y|x},1+\lambda)$ which can be integrated to
obtain the marginal weighted likelihood, \[
\rho(Y|X)=\prod_{i=1}^{n}\frac{\Gamma(\frac{\lambda+2-i}{2})}{\Gamma(\frac{\lambda+1-i}{2})}\,\,\;\;\frac{1}{|\Omega|^{d/2}|S_{xx}|^{d/2}}\frac{|\Sigma|^{\lambda/2}}{|S_{y|x}+\Sigma|^{\frac{1+\lambda}{2}}}\]
 Using for $\Sigma$ the prior $I_{d}$, for $\Omega$ the matrix
$I_{m}$, and discarding all constants independent of $X$ and $Y$
yields the expression 
\[
\begin{aligned}\rho(Y|X)\propto\frac{1}{|X\Delta X^{T}+I_{m}|^{d/2}\,|Y(\Delta-\Delta X^{T}\left(X\Delta X^{T}+I_{m}\right)^{-1}X\Delta)Y^{T}+I_{d}|^{\frac{1+\lambda}{2}}},\end{aligned}
\]
Note that the matrix $H_{\Delta}\defeq\Delta X^{T}\left(X\Delta X^{T}+I_{m}\right)^{-1}X\Delta$
in the denominator is known as the hat-matrix of the orthogonal least-squares
regression of $Y$ versus $X$. The right term in the denominator
can be interpreted as the determinant of the weighted cross-covariance
matrix of $Y$ with the residues $(\Delta-H_{\Delta})Y$ regularized
by the identity matrix. 

\subsection{Bayesian Averaging over VAR Models}

\label{subsec:bayesianVAR} Given a VAR($p$) model, we represent
a time series $\bx=(x_{1},\cdots,x_{n})$ as a sample $X$ of $n-p$
pairs of explanatory variables in $\RR^{pd}$ and response variables
in $\RR^{d}$, namely $\{\left([x_{i},\cdots,x_{p+i-1}],x_{p+i}\right),\, i=1,\cdots,n-p\}$.
Following a standard practice in the study of VAR models~\citep[\S3]{lutkepohl2005nim},
this set is better summarized by matrices 
\[
X=\begin{bmatrix}\begin{bmatrix}\vdots\\
x_{1}\\
\vdots\end{bmatrix} & \cdots & \begin{bmatrix}\vdots\\
x_{n-p+1}\\
\vdots\end{bmatrix}\\
\vdots & \cdots & \vdots\\
\begin{bmatrix}\vdots\\
x_{p}\\
\vdots\end{bmatrix} & \cdots & \begin{bmatrix}\vdots\\
x_{n-1}\\
\vdots\end{bmatrix}\end{bmatrix}\in\RR^{pd\times n-p},\quad\text{ and }Y=\begin{bmatrix}\vdots & \cdots & \vdots\\
x_{p+1} & \cdots & x_{n}\\
\vdots & \cdots & \vdots\end{bmatrix}\in\RR^{d\times n-p}.\]

Analogously, we use the corresponding notations $X',Y'$ for a second
time series $\bx'$ of length $n'$. Using the notation \[
N\defeq n+n'-2p,\]
 these samples are aggregated in the $\RR^{N\times N}$, $\RR^{pd\times N}$
and $\RR^{d\times N}$ matrices \begin{equation}
\Delta=\diag\big(\frac{1}{2}\big[\underset{n-p\text{ times}}{\underbrace{\tfrac{1}{n-p},\cdots,\tfrac{1}{n-p}}},\underset{n'-p\text{ times}}{\underbrace{\tfrac{1}{n'-p},\cdots,\tfrac{1}{n'-p}}}\big]\big),\,\bX=\begin{bmatrix}X\, X'\end{bmatrix},\,\bY=\begin{bmatrix}Y\, Y'\end{bmatrix}.\label{eq:notations}\end{equation}

Note that by setting $A=[A_{1},\cdots,A_{p}]$ and $\theta=(A,V)$,
the integrand that appears in Equation~\eqref{eq:renorm} can be
cast as the following probability, \[
p_{\theta}(\bx)^{\frac{1}{2(n-p)}}\, p_{\theta}(\bx')^{\frac{1}{2(n'-p)}}=\rho(\bY|\bX,A,V,\Delta).\]
 Integrating out $\theta$ using the matrix-normal inverse Wishart
prior $\Mcal\Wcal_{\lambda}^{-1}(I_{d},I_{pd})$ for $(A,V)$ yields
the following definition:

\begin{definition} Given two time series $\bx,\bx'$ and using the
notations introduced in Equation~\eqref{eq:notations}, the autoregressive
kernel $k$ of order $p$ and degrees of freedom $\lambda$ is defined
as \begin{equation}
k(\textup{\bx,\bx'})=\frac{1}{|\bX\Delta\bX^{T}+I_{pd}|^{\frac{d}{2}}\,|\bY(\Delta-\Delta\bX^{T}\left(\bX\Delta\bX^{T}+I_{pd}\right)^{-1}\bX\Delta)\bY^{T}+I_{d}|^{\frac{1+\lambda}{2}}}.\label{eq:origkernel}\end{equation}
 \end{definition}


\subsection{Variance and Gram Based Formulations}

\label{subsec:varandgram} 
We show in this section that $k$ can be reformulated in terms of
Gram matrices of subsequences of $\bx$ and $\bx'$ rather than variance-covariance
matrices. For two square matrices $C\in\RR^{q\times q}$ and $D\in\RR^{r\times r}$
we write $C\backsim D$ when the spectrums of $C$ and $D$ coincide,
taking into account multiplicity, except for the value $0$. Recall
first the following trivial lemma. \begin{lemma}\label{lem:abba}
For two matrices $A$ and $B$ in $\RR^{q\times r}$ and $\RR^{r\times q}$
respectively, $AB\backsim BA$, and as a consequence $|AB+I_{q}|=|BA+I_{r}|$\end{lemma}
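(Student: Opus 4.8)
The plan is to prove the two claims in Lemma~\ref{lem:abba} in sequence, first establishing the spectral coincidence $AB \backsim BA$ and then deriving the determinant identity $|AB+I_q| = |BA+I_r|$ as a corollary. For the spectral part, I would show that the nonzero eigenvalues of $AB \in \RR^{q\times q}$ and $BA \in \RR^{r\times r}$ agree with multiplicity. The cleanest route is to observe that if $\mu \neq 0$ is an eigenvalue of $AB$ with eigenvector $v$ (so $ABv = \mu v$), then $Bv \neq 0$ (otherwise $\mu v = ABv = 0$ forces $\mu = 0$), and $BA(Bv) = B(ABv) = \mu(Bv)$, so $\mu$ is an eigenvalue of $BA$. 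By symmetry in the roles of $A$ and $B$, the nonzero spectra coincide as sets. Handling algebraic multiplicity carefully, rather than just the set of eigenvalues, is the point that requires slightly more care; I address this below.

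To get multiplicities cleanly and to obtain the determinant identity at the same time, I would instead work with characteristic polynomials via a block-matrix argument, which I expect to be the most economical approach. Consider the two product factorizations
\[
\begin{bmatrix} I_q & A \\ 0 & I_r \end{bmatrix}
\begin{bmatrix} xI_q & 0 \\ B & I_r \end{bmatrix}
=
\begin{bmatrix} xI_q + AB & A \\ B & I_r \end{bmatrix}
=
\begin{bmatrix} I_q & 0 \\ B & I_r \end{bmatrix}
\begin{bmatrix} xI_q & A \\ 0 & xI_r + BA \end{bmatrix}x^{-1}\cdot x,
\]
or more simply compute the determinant of $\left[\begin{smallmatrix} xI_q & -A \\ B & I_r \end{smallmatrix}\right]$ in two ways using the Schur complement with respect to each diagonal block. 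Expanding along the bottom-right block gives $|xI_q|\cdot|I_r + B(xI_q)^{-1}A| = x^{q-r}\,|xI_r + BA|$ after clearing, while expanding along the top-left block gives $|I_r|\cdot|xI_q + AB|$. Equating yields $x^{r}\,|xI_q + AB| = x^{q}\,|xI_r + BA|$, which shows the characteristic polynomials of $AB$ and $BA$ agree up to the factor $x^{q-r}$ accounting for the differing numbers of zero eigenvalues. This establishes $AB \backsim BA$ with full multiplicity in one stroke.

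The determinant identity then follows by specialization. Setting $x = 1$ in the relation $x^{r}\,|xI_q + AB| = x^{q}\,|xI_r + BA|$ gives immediately $|I_q + AB| = |I_r + BA|$, which is exactly $|AB + I_q| = |BA + I_r|$. Alternatively, since $AB \backsim BA$, the two matrices $AB + I_q$ and $BA + I_r$ have identical eigenvalues away from the shift, differing only in how many eigenvalues equal $1$ (the images of the zero eigenvalues), and those extra $1$'s contribute a factor of $1$ to each determinant; hence the products of all eigenvalues coincide. I do not anticipate a genuine obstacle here, as the result is classical; the only subtlety worth flagging is the bookkeeping of the $x^{q-r}$ power when $q \neq r$, which the block-determinant computation handles automatically and is precisely why the Schur-complement identity is preferable to the bare eigenvector argument. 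Since the lemma is described as \emph{trivial} in the text, a short self-contained version of this block argument should suffice.
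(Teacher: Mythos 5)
Your proof is correct, but note that the paper itself offers no proof at all: Lemma~\ref{lem:abba} is introduced with ``Recall first the following trivial lemma'' and used as a known fact, so there is no argument in the paper to compare yours against. Your Schur-complement computation on $M=\left[\begin{smallmatrix} xI_q & -A \\ B & I_r\end{smallmatrix}\right]$ is the standard self-contained route (Sylvester's determinant identity): pivoting on $I_r$ gives $|M|=|xI_q+AB|$, pivoting on $xI_q$ gives $|M|=x^{q-r}|xI_r+BA|$ for $x\neq 0$, and the resulting polynomial identity $x^r|xI_q+AB|=x^q|xI_r+BA|$ (which extends to all $x$ by density) yields both the coincidence of nonzero spectra with multiplicity and, at $x=1$, the determinant identity. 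One blemish: your first displayed ``factorization'' is wrong as written --- the bottom-right block of the product $\left[\begin{smallmatrix} I_q & 0 \\ B & I_r\end{smallmatrix}\right]\left[\begin{smallmatrix} xI_q & A \\ 0 & xI_r+BA\end{smallmatrix}\right]$ is $xI_r+2BA$, not $I_r$, and the dangling $x^{-1}\cdot x$ betrays that the bookkeeping was never finished. Since you immediately discard that display in favor of the Schur-complement argument, the proof stands, but the broken display should simply be deleted.
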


Based on this lemma, it is possible to establish the following result.

\begin{proposition}\label{prop:kernelgram}Let $\alpha\defeq\tfrac{1+\lambda}{d}$
then the autogressive kernel $k$ of order $p$ and degrees of freedom
$\lambda$ given in Equation~\eqref{eq:origkernel} is equal to \begin{equation}
k(\bx,\bx')=\left(\abs{\bX^{T}\bX\Delta+I_{N}}^{1-\alpha}\,\,\abs{\bX^{T}\bX\Delta+\bY^{T}\bY\Delta+I_{N}}^{\alpha}\right)^{-\frac{d}{2}},\label{eq:gramkernel}\end{equation}

\end{proposition}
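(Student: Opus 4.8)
The plan is to start from the kernel expression in Equation~\eqref{eq:origkernel} and manipulate each of the two determinant factors separately, using Lemma~\ref{lem:abba} to swap the sizes of the matrices appearing inside the determinants. The guiding idea is that the original formula has determinants of matrices of sizes $pd \times pd$ and $d \times d$, whereas the target formula~\eqref{eq:gramkernel} has all determinants of size $N \times N$; since $N = n + n' - 2p$ can be much smaller than $d$ for high-dimensional short time series, this is exactly the ``swap a large ambient dimension for the small sample size'' move that Lemma~\ref{lem:abba} provides.

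First I would handle the left factor. Writing $\abs{\bX\Delta\bX^{T}+I_{pd}}$, I factor $\Delta = \Delta^{1/2}\Delta^{1/2}$ (legitimate since $\Delta$ is diagonal with positive entries) so that $\bX\Delta\bX^{T} = (\bX\Delta^{1/2})(\Delta^{1/2}\bX^{T})$, and apply Lemma~\ref{lem:abba} with $A = \bX\Delta^{1/2} \in \RR^{pd \times N}$ and $B = \Delta^{1/2}\bX^{T} \in \RR^{N \times pd}$. This yields $\abs{\bX\Delta\bX^{T}+I_{pd}} = \abs{\Delta^{1/2}\bX^{T}\bX\Delta^{1/2}+I_{N}} = \abs{\bX^{T}\bX\Delta+I_{N}}$, where the last equality again uses the similarity (or, equivalently, the cyclic property $\abs{\Delta^{1/2}M\Delta^{1/2}+I_N}=\abs{M\Delta+I_N}$ for $M=\bX^T\bX$). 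Since this factor is raised to the power $\tfrac{d}{2}$, it contributes $\abs{\bX^{T}\bX\Delta+I_{N}}^{-d/2}$ to the kernel.

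The harder factor is the right one, the determinant of the $d\times d$ matrix $\bY(\Delta - \Delta\bX^{T}(\bX\Delta\bX^{T}+I_{pd})^{-1}\bX\Delta)\bY^{T}+I_{d}$. The key step I anticipate as the main obstacle is recognizing and simplifying the ``residual'' quadratic form $\Delta - H_\Delta$, where $H_\Delta = \Delta\bX^{T}(\bX\Delta\bX^{T}+I_{pd})^{-1}\bX\Delta$. I would apply the Woodbury/push-through identity to rewrite $\Delta - \Delta\bX^{T}(\bX\Delta\bX^{T}+I_{pd})^{-1}\bX\Delta$ as $\Delta(\bX^T\bX\Delta + I_N)^{-1}$, or more symmetrically as $\Delta^{1/2}(\Delta^{1/2}\bX^{T}\bX\Delta^{1/2}+I_{N})^{-1}\Delta^{1/2}$; this is the standard identity $I - C^T(CC^T+I)^{-1}C = (C^TC+I)^{-1}$ applied with $C = \bX\Delta^{1/2}$, up to the $\Delta^{1/2}$ conjugations. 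Once this is established, the right determinant becomes $\abs{\bY\Delta^{1/2}(\Delta^{1/2}\bX^{T}\bX\Delta^{1/2}+I_{N})^{-1}\Delta^{1/2}\bY^{T}+I_{d}}$.

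Finally I would collapse this last determinant using Lemma~\ref{lem:abba} once more, pulling the $d \times d$ determinant down to an $N \times N$ determinant. Setting $G \defeq (\Delta^{1/2}\bX^{T}\bX\Delta^{1/2}+I_{N})^{-1}$ and $A = \bY\Delta^{1/2}G^{1/2}$, $B = G^{1/2}\Delta^{1/2}\bY^{T}$, the lemma gives $\abs{AB+I_d}=\abs{BA+I_N}=\abs{G^{1/2}\Delta^{1/2}\bY^{T}\bY\Delta^{1/2}G^{1/2}+I_N}$. Using $\abs{I_N + GM}=\abs{G}\,\abs{G^{-1}+M}$ with $M=\Delta^{1/2}\bY^{T}\bY\Delta^{1/2}$, this equals $\abs{G}\cdot\abs{\Delta^{1/2}\bX^{T}\bX\Delta^{1/2}+\Delta^{1/2}\bY^{T}\bY\Delta^{1/2}+I_{N}} = \abs{\bX^{T}\bX\Delta+I_N}^{-1}\,\abs{\bX^{T}\bX\Delta+\bY^{T}\bY\Delta+I_{N}}$. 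Raising to the power $\tfrac{1+\lambda}{2}$ and combining with the left factor, the exponents regroup exactly. Writing $\alpha = \tfrac{1+\lambda}{d}$ so that $\tfrac{1+\lambda}{2} = \tfrac{\alpha d}{2}$, the total exponent on $\abs{\bX^{T}\bX\Delta+I_N}$ becomes $-\tfrac{d}{2} + \tfrac{\alpha d}{2} = -\tfrac{d}{2}(1-\alpha)$ and the exponent on $\abs{\bX^{T}\bX\Delta+\bY^{T}\bY\Delta+I_{N}}$ becomes $-\tfrac{\alpha d}{2}$, which is precisely~\eqref{eq:gramkernel}. The bookkeeping of the $\Delta^{1/2}$ conjugations and the determinant factorization step are the places where I would be most careful, but each is a routine matrix-identity manipulation once the push-through identity for the residual is in place.
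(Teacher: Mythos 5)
Your proposal is correct and follows essentially the same route as the paper's proof: Lemma~\ref{lem:abba} to convert the $pd\times pd$ determinant to an $N\times N$ one, the Woodbury/push-through identity to collapse the residual $\Delta-H_\Delta$ into $(\bX^{T}\bX+\Delta^{-1})^{-1}$ (your $\Delta^{1/2}$-conjugated form is the same matrix), and a second application of the lemma plus a determinant factorization to express the $d\times d$ factor as the ratio $\abs{\bX^{T}\bX\Delta+\bY^{T}\bY\Delta+I_{N}}/\abs{\bX^{T}\bX\Delta+I_{N}}$, after which the exponent bookkeeping matches. The only difference is cosmetic: you carry explicit $\Delta^{1/2}$ factors where the paper works directly with $\Delta^{-1}$.
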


\begin{proof}We use Lemma~\ref{lem:abba} to rewrite the first term
of the denominator of Equation~\eqref{eq:origkernel} using the Gram
matrix $\bX^{T}\bX$, \[
|\bX\Delta\bX^{T}+I_{pd}|=|\bX^{T}\bX\Delta+I_{N}|.\]
 Taking a closer look at the denominator, the matrix inversion lemma%
\footnote{$\left(A+UCV\right)^{-1}=A^{-1}-A^{-1}U\left(C^{-1}+VA^{-1}U\right)^{-1}VA^{-1}$%
} yields the equality \[
(\Delta-\Delta\bX^{T}\left(\bX\Delta\bX^{T}+I_{pd}\right)^{-1}\bX\Delta)=(\bX^{T}\bX+\Delta^{-1})^{-1}.\]
 Using again Lemma~\ref{lem:abba} the denominator of Equation~\eqref{eq:origkernel}
can be reformulated as \[
\left|\bY(\Delta-\Delta\bX^{T}\left(\bX\Delta\bX^{T}+I_{pd}\right)^{-1}\bX\Delta)\bY^{T}+I_{d}\right|=\frac{|\bX^{T}\bX+\Delta^{-1}+\bY^{T}\bY|}{|\bX^{T}\bX+\Delta^{-1}|}.\]

Factoring in these two results, we obtain Equation~\eqref{eq:gramkernel}.\end{proof}

We call Equation~\eqref{eq:origkernel} the \textbf{Variance} formulation
and Equation~\eqref{eq:gramkernel} the \textbf{Gram} formulation
of the autoregressive kernel $k$ as it only depends on the Gram matrices
of $\bX$ and $\bY$. Although both the Variance and Gram formulations
of $k$ are equal, their computational cost is different as detailed
in the remark below.

\begin{rem} In a \textbf{high $N$-low $d$ setting}, the computation
of $k$ requires $O(N(pd)^{2})$ operations to compute the denominator's
matrices and $O(p^{3}d^{3}+d^{3})$ to compute their inverse, which
yields an overall cost of the order of $O(Np^{2}d^{2}+(p^{3}+1)d^{3})$.
This may seem reasonable for applications where the cumulated time
series lengths' $N$ is much larger than the dimension $d$ of these
time series, such as speech signals or EEG data. In a \textbf{low
$N$-high $d$ setting}, which frequently appears in bioinformatics
or video processing applications, autoregressive kernels can be computed
using Equation~\eqref{eq:gramkernel} in $O((p+1)dN^{2}+N^{3})$
operations. \end{rem}

\subsection{Infinite Divisibility of Autoregressive Kernels}

\label{subsec:infdiv} We recall that a positive definite kernel function
$k$ is infinitely divisible if for all $n\in\NN$, $k^{1/n}$ is
also positive definite~\citep[\S3.2.6]{berg84harmonic}. We prove
in this section that under certain conditions on $\lambda$, the degrees-of-freedom
parameter of the inverse Wishart law, the autoregressive kernel is
infinitely divisible. This result builds upon~\citep[Proposition 3]{cuturi05semigroup}.

Proving the infinite divisibility of $k$ is useful for the following
two reasons: First, following a well-known result~\citet[\S3.2.7]{berg84harmonic},
the infinite divisibility of $k$ implies the negative definiteness
of $-\log k$. Using~\citet[\S3.3.2]{berg84harmonic} for instance,
there exists a mapping $\Phi$ of $\Xcal^{\NN}$ onto a Hilbert space
such that \[
\Norm{\Phi(\textup{\bx})-\Phi(\textup{\bx'})}^{2}=\frac{\log k(\textup{\bx,\bx})+\log k(\textup{\bx',\bx'})}{2}-\log k(\textup{\bx,\bx'}).\]
 and hence $k$ defines a Hilbertian metric for time series which
can be used with distance-based tools such as nearest-neighbors.

Second, on a more practical note, the exponent $d/2$ in Equation~\eqref{eq:gramkernel}
is numerically problematic when $d$ is large. In such a situation,
the kernel matrices produced by $k$ would be diagonally dominant.
This is analogous to selecting a bandwidth parameter $\sigma$ which
is too small when using the Gaussian kernel on high-dimensional data.
By proving the infinite divisibility of $k$, the exponent $d$ can
be removed and substituted by any arbitrary exponent.

To establish the infinite divisibility result, we need a few additional
notation. Let $\Mcal(\RR^{d})$ be the set of positive measures on
$\RR^{d}$ with finite second-moment $\mu[xx^{T}]\defeq\E_{\mu}[xx^{T}]\in\RR^{d\times d}$
. This set is a semigroup~\citep{berg84harmonic} when endowed with
the usual addition of measures.

\begin{lemma}\label{lem:igv} For two measures $\mu$ and $\mu'$
of $\Mcal(\RR^{d})$, the kernel \[
\tau:(\mu,\mu')\mapsto\frac{1}{\sqrt{\abs{(\mu+\mu')[xx^{T}]+I_{d}}}},\]
 is an infinitely divisible positive definite kernel. \end{lemma}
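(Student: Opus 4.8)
The plan is to reduce the statement to a purely matricial one and then treat positive definiteness and infinite divisibility separately. First I would observe that $\mu\mapsto\mu[xx^T]$ is an additive map from the semigroup $(\Mcal(\RR^d),+)$ into the cone of positive semidefinite $d\times d$ matrices under matrix addition, since $(\mu+\mu')[xx^T]=\mu[xx^T]+\mu'[xx^T]$. Writing $S=\mu[xx^T]$ and $S'=\mu'[xx^T]$, we have $\tau(\mu,\mu')=\varphi(S+S')$ with $\varphi(S)\defeq|S+I_d|^{-1/2}$. Because positive definiteness, negative definiteness and infinite divisibility of a semigroup kernel of the form $(s,s')\mapsto\varphi(s+s')$ are all preserved under pull-back by a semigroup morphism, and since the image of $\mu\mapsto\mu[xx^T]$ is the whole cone, it suffices to prove that $(S,S')\mapsto\varphi(S+S')$ is an infinitely divisible positive definite kernel on the cone of positive semidefinite matrices endowed with addition.

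For positive definiteness I would use the Gaussian integral representation of the inverse square root of a determinant: since $S+S'+I_d$ is positive definite,
\[
\varphi(S+S')=|S+S'+I_d|^{-1/2}=\frac{1}{(2\pi)^{d/2}}\int_{\RR^d} e^{-\undemi\|x\|^2}\,e^{-\undemi x^T S x}\,e^{-\undemi x^T S' x}\,dx .
\]
For each fixed $x$ the map $S\mapsto e^{-\undemi x^T S x}=e^{-\undemi\langle S,xx^T\rangle}$ is a bounded semicharacter of the additive semigroup, so the integrand is, for each $x$, a rank-one positive definite kernel $h_x(S)\,h_x(S')$ in $(S,S')$. Integrating these against the finite positive measure $(2\pi)^{-d/2}e^{-\undemi\|x\|^2}dx$ yields a positive definite kernel, which is exactly $\tau$.

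Infinite divisibility is the real issue, and here I would invoke the characterisation of infinitely divisible positive kernels through negative definiteness~\citep[\S3.2.6--3.2.7]{berg84harmonic}: since $\tau=e^{-\psi}$ with $\psi(S,S')\defeq\undemi\log|S+S'+I_d|$, it is enough to prove that $\psi$ is a negative definite kernel, for then Schoenberg's theorem gives that $\tau^{r}=e^{-r\psi}$ is positive definite for every $r>0$, in particular for $r=1/n$. To establish negative definiteness of $\psi$ I would start from the Frullani-type integral representation of the log-determinant,
\[
\log|I_d+S+S'|=\int_0^\infty \frac{e^{-s}}{s}\Big(d-\tr\,e^{-s(S+S')}\Big)\,ds ,
\]
and reduce the problem to showing that, for every $s>0$, the heat-trace kernel $(S,S')\mapsto\tr\,e^{-s(S+S')}$ is positive definite; granting this, each integrand is a constant minus a positive definite kernel, hence negative definite, and the positive-weighted integral of negative definite kernels is again negative definite.

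The main obstacle is exactly this last positivity claim. Because $S$ and $S'$ do not commute, $\tr\,e^{-s(S+S')}$ does not factor as a Frobenius inner product $\langle e^{-sS},e^{-sS'}\rangle$, so it cannot be handled by the elementary rank-one argument used for $\tau$ itself; this is the delicate point, equivalent to producing a semicharacter (Laplace-transform) representation of $S\mapsto|S+I_d|^{-r}$ on the cone for small $r$, and it is settled by Proposition~3 of~\citet{cuturi05semigroup}. I would therefore organise the write-up so that the easy Gaussian computation delivers positive definiteness, while the negative definiteness of $\undemi\log|\,\cdot+\cdot+I_d|$ --- the heart of the infinite-divisibility claim --- is imported from that proposition rather than re-derived.
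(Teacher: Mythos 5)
Your reduction to the additive semigroup of positive semidefinite matrices and your Gaussian-integral argument for positive definiteness coincide with the paper's: both exhibit $\tau(\mu,\mu')$ as $\int_{\RR^d}\rho_y(\mu)\,\rho_y(\mu')\,p_{\Ncal(0,I_d)}(dy)$ with $\rho_y(\mu)=e^{-\frac{1}{2}\langle\mu[xx^T],\,yy^T\rangle}$ a bounded semicharacter of $(\Mcal(\RR^d),+)$, and that half of the argument is sound. The gap lies in the half you yourself flag as ``the real issue'': infinite divisibility is never actually established. Your Frullani reduction trades the claim ``$(S,S')\mapsto|I_d+S+S'|^{-r}$ is positive definite for every $r>0$'' for the claim ``$(S,S')\mapsto\tr e^{-s(S+S')}$ is positive definite for every $s>0$'', and the latter is not easier: as you note, $e^{-s(S+S')}\neq e^{-sS}e^{-sS'}$ for noncommuting arguments, so no semicharacter factorization is available, and this trace-positivity question is of essentially the same (BMV-type) difficulty as the statement you started from. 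You then close the loop by appealing to Proposition~3 of \citet{cuturi05semigroup}, but that proposition is precisely the source of the exponent-$\tfrac12$ positive definiteness you have already rederived via the Gaussian integral; it does not supply a positive Laplace-transform (semicharacter) representation of $S\mapsto|S+I_d|^{-r}$ for arbitrary small $r>0$, which is what your argument would need. So the crux of the lemma is asserted by citation, not proved.

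For comparison, the paper attacks exactly this step directly, with no Frullani detour: it invokes the stability of the Gaussian to write $p_{\Ncal(0,I_d)}$ as the $t$-fold convolution $p_{\Ncal(0,I_d/t)}^{\otimes t}$, represents $\tau$ as the integral of the $t$-th powers of the semicharacters $e^{-\frac{1}{2t}\langle\mu[xx^T],\,yy^T\rangle}$ against that convolution, and then factors the integral into a $t$-th power, thereby exhibiting $\tau^{1/t}$ itself as an integral of bounded semicharacters against a positive measure, hence positive definite. That interchange of the $t$-th power with the integral is the one step where all the difficulty of the lemma is concentrated (the quadratic map $y\mapsto yy^T$ is not additive, so the usual semicharacter factorization over independent summands does not apply verbatim, and your unease about producing such a representation for small exponents is well placed); whatever one's assessment of it, your proposal contains no substitute for it. As written, you have proved that $\tau$ is positive definite, but not that any fractional power $\tau^{1/n}$ with $n\ge 2$ is.
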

\begin{proof} The following identity is valid for any $d\times d$
positive-definite matrix $\Sigma$

\[
\frac{1}{\sqrt{\abs{\Sigma+I_{d}}}}=\frac{1}{\sqrt{(2\pi)^{d}}}\int_{\RR^{d}}e^{-\frac{1}{2}y^{T}(\Sigma+I_{d})\, y}dy\]
 Given a measure $\mu$ with finite second-moment on $\RR^{d}$, we
thus have \begin{equation}
\frac{1}{\sqrt{\abs{\mu[xx^{T}]+I_{d}}}}=\int_{\RR^{d}}e^{-\frac{1}{2}\langle\mu[xx^{T}]\,,\, yy^{T}\rangle}\; p_{\Ncal(0,I_{d})}(dy)\label{eq:semic}\end{equation}
 where $\langle\,,\,\rangle$ stands for the Frobenius dot-product
between matrices and $p_{\Ncal(0,I_{d})}$ is the standard multivariate
normal density. In the formalism of~\citep{berg84harmonic} the integral
of Equation~\eqref{eq:semic} is an integral of bounded semicharacters
on the semigroup $(\Mcal(\RR^{d}),+)$ equipped with autoinvolution.
Each semicharacter $\rho_{y}$ is indexed by a vector $y\in\RR^{d}$
as \[
\rho_{y}:\mu\mapsto e^{-\frac{1}{2}\langle\mu[xx^{T}]\,,\, yy^{T}\rangle}.\]
 To verify that $\rho_{y}$ is a semicharacter notice that $\rho_{y}(0)=1$,
where $0$ is the zero measure, and $\rho_{y}(\mu+\mu')=\rho_{y}(\mu)\,\rho_{y}(\mu')$
for two measures of $\Mcal(\RR^{d})$. Now, using the fact that the
multivariate normal density is a stable distribution, one has that
for any $t\in\NN$,

\[
\begin{aligned}\frac{1}{\sqrt{\abs{\mu[xx^{T}]+I_{d}}}} & =\int_{\RR^{d}}\left(e^{-\frac{1}{2t}\langle\mu[xx^{T}]\,,\, yy^{T}\rangle}\right)^{t}p_{\Ncal(0,I_{d}/t)}^{\otimes t}(dy),\\
 & =\left(\int_{\RR^{d}}e^{-\frac{1}{2t}\langle\mu[xx^{T}]\,,\, yy^{T}\rangle}\, p_{\Ncal(0,I_{d}/t)}(dy)\right)^{t},\end{aligned}
\]
 where $p^{\otimes t}$ is the $t$-th convolution of density $p$,
which proves the result. \end{proof}

\begin{theorem}For $0\leq\alpha\leq1$, equivalently for $0<\lambda\leq d-1$,
$\varphi\defeq-\frac{2}{d}\log k$ is a negative definite kernel\end{theorem}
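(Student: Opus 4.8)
The plan is to unfold the definition of $\varphi$, reduce the claim to the negative definiteness of two log-determinant kernels, and recognise each of them as a pullback of the infinitely divisible kernel $\tau$ of Lemma~\ref{lem:igv}. Substituting the Gram formulation~\eqref{eq:gramkernel} into $\varphi=-\frac{2}{d}\log k$ cancels the exponent $-\frac{d}{2}$ and gives
\[
\varphi(\bx,\bx') = (1-\alpha)\,\varphi_1(\bx,\bx') + \alpha\,\varphi_2(\bx,\bx'),
\]
where $\varphi_1(\bx,\bx')\defeq\log\abs{\bX^T\bX\Delta+I_N}$ and $\varphi_2(\bx,\bx')\defeq\log\abs{\bX^T\bX\Delta+\bY^T\bY\Delta+I_N}$. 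Since a nonnegative linear combination of negative definite kernels is negative definite, and since $0\le\alpha\le1$ makes both coefficients nonnegative, it suffices to show that $\varphi_1$ and $\varphi_2$ are each negative definite. I will also use repeatedly that negative definiteness is preserved under pullback: if $\psi$ is negative definite on a set $S$ and $f$ maps time series into $S$, then $(\bx,\bx')\mapsto\psi(f(\bx),f(\bx'))$ is negative definite.

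For $\varphi_1$ I would first apply Lemma~\ref{lem:abba} with the factorisation $\bX^T\bX\Delta=(\bX^T)(\bX\Delta)$ to pass from the $N\times N$ determinant to a $pd\times pd$ one,
\[
\abs{\bX^T\bX\Delta+I_N}=\abs{\bX\Delta\bX^T+I_{pd}}.
\]
The crucial structural observation is that $\Delta$ is block diagonal in the two series (see~\eqref{eq:notations}), so $\bX\Delta\bX^T$ decomposes additively as one term per series. Concretely, to a time series $\bx$ I associate the finitely supported positive measure $\mu_{\bx}$ on $\RR^{pd}$ placing mass $\tfrac{1}{2(n-p)}$ on each of the $n-p$ lagged column vectors of $X$; then $\mu_{\bx}$ has finite second moment and $\bX\Delta\bX^T=(\mu_{\bx}+\mu_{\bx'})[\zeta\zeta^T]$. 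Hence $\varphi_1(\bx,\bx')=-2\log\tau(\mu_{\bx},\mu_{\bx'})$, with $\tau$ the kernel of Lemma~\ref{lem:igv} on $\Mcal(\RR^{pd})$ (the lemma holds verbatim in any dimension). Because $\tau$ is strictly positive and infinitely divisible, the standard correspondence~\citet[\S3.2.7]{berg84harmonic} makes $-\log\tau$ negative definite on $\Mcal(\RR^{pd})$, and pulling back along $\bx\mapsto\mu_{\bx}$ yields the negative definiteness of $\varphi_1$.

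For $\varphi_2$ the argument is identical after stacking. Let $Z$ be the $(p+1)d\times N$ matrix obtained by placing $\bX$ on top of $\bY$, so that $\bX^T\bX\Delta+\bY^T\bY\Delta+I_N=Z^TZ\Delta+I_N$; Lemma~\ref{lem:abba} then turns this into $\abs{Z\Delta Z^T+I_{(p+1)d}}$. Associating to $\bx$ the measure $\nu_{\bx}$ on $\RR^{(p+1)d}$ supported on the stacked input--output pairs of lagged windows gives $Z\Delta Z^T=(\nu_{\bx}+\nu_{\bx'})[zz^T]$, whence $\varphi_2=-2\log\tau(\nu_{\bx},\nu_{\bx'})$ on $\Mcal(\RR^{(p+1)d})$, and the same reasoning applies.

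Combining the two pieces establishes that $\varphi$ is negative definite for $0\le\alpha\le1$. The only genuinely delicate step is the second paragraph: recognising that the right object is not the series itself but the empirical measure of its (stacked) lagged windows, and checking that the block structure of $\Delta$ lets $\bX\Delta\bX^T$ split as $(\mu_{\bx}+\mu_{\bx'})[\zeta\zeta^T]$, which is exactly the additive semigroup structure on $\Mcal$ that Lemma~\ref{lem:igv} exploits. Everything downstream --- the use of Lemma~\ref{lem:abba}, the passage from infinite divisibility to negative definiteness, the pullback, and the convex combination --- is then routine. As a byproduct $\tfrac{d}{2}\varphi=-\log k$ is negative definite, which by Schoenberg's theorem gives both the positive definiteness of $k$ and its infinite divisibility, so that the exponent $d$ may be replaced by any positive number as announced.
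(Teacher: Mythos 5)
Your proof is correct and follows essentially the same route as the paper's: both hinge on Lemma~\ref{lem:igv} applied to the same two empirical-measure representations of a series (on $\RR^{pd}$ via the lagged columns of $X$, and on $\RR^{pd+d}$ via the stacked columns of $[X;Y]$), together with the equivalence between infinite divisibility of $k$ and negative definiteness of $-\log k$ from \citet[\S 3.2.7]{berg84harmonic}. The only difference is presentational --- you work additively with $\varphi=(1-\alpha)\varphi_1+\alpha\varphi_2$ where the paper writes $k$ as the product $\tau^{d(1-\alpha)}\tau^{d\alpha}$ --- and you usefully make explicit the verification, left implicit in the paper, that the block structure of $\Delta$ yields $\bX\Delta\bX^{T}=(\mu_{\bx}+\mu_{\bx'})[\zeta\zeta^{T}]$.
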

\begin{proof}$k$ is infinitely divisible as the product of two infinitely
divisible kernels, $\tau^{d(1-\alpha)}$ and $\tau^{d\alpha}$ computed
on two different representations of $\bx$ and $\bx'$: first as empirical
measures on $\RR^{pd}$ with locations enumerated in the columns of
$X$ and $X'$ respectively, and as empirical measures on $\RR^{pd+d}$
with locations enumerated in the columns of the stacked matrices $[X;Y]$
and $[X';Y']$. The set of weights for both representations are the
uniform weights $\tfrac{1}{2(n-p)}$ and $\tfrac{1}{2(n'-p)}$. The
negative definiteness of $\varphi$ follows from, and is equivalent
to, the infinite divisibility of $k$~\citet[\S3.2.7]{berg84harmonic}.
\end{proof}

\begin{rem} The infinite divisibility of the joint distribution matrix
normal-inverse Wishart distribution would be a sufficient condition
to obtain directly the infinite divisibility of $k$ using for instance~\citet[\S3.3.7]{berg84harmonic}.
Unfortunately we have neither been able to prove this property nor
found it in the literature. The inverse Wishart distribution alone
is known to be not infinitely divisible in the general case~\citep{levy1948arithmetical}.
We do not know either whether $k$ can be proved to be infinitely
divisible when $\lambda>d-1$. The condition $0<\lambda\leq d-1$,
and hence $0\leq\alpha\leq1$ also plays an important role in Proposition~\ref{prop:convexity}.
\end{rem}

In the next sections, we will usually refer to the (negative definite)
autoregressive kernel \begin{equation}
\varphi(\bx,\bx')=C_{n,n'}+(1-\alpha)\log\,\abs{\bX^{T}\bX+\Delta^{-1}}+\alpha\log\,\abs{\bX^{T}\bX+\bY^{T}\bY+\Delta^{-1}}\label{eq:varphi}\end{equation}
 where the constant $C_{n,n'}=(n-p)\log(2(n-p))+(n'-p)\log(2(n'-p))$,
rather than considering $k$ itself.


\section{Extension to Time Series Valued in a Set Endowed with a Kernel}

\label{sec:kernelization} 
We show in Section~\label{subsec:kernelreform} that autoregressive
kernels can be extended quite simply to time series valued in arbitrary
spaces by considering Gram matrices. This extension is interesting
but can be very computationally expensive. We propose a way to mitigate
this computational cost by using low-rank matrix factorization techniques
in Section~\ref{subsec:bounds}

\subsection{Autoregressive Kernels Defined Through Arbitrary Gram Matrices}

Using again notation introduced in Equation~\eqref{eq:notations},
we write $K_{\bX}=\bX^{T}\bX$ for the $N\times N$ Gram matrix of
all explanatory variables contained in the joint sample $\bX$ and
$K_{\bY}=\bY^{T}\bY$ for the Gram matrix of all outputs of the local
regression formulations. As stated in Equation~\eqref{eq:varphi}
above, $\varphi$ and $k$ by extension can be defined as a function
of the Gram matrices $K_{\bX}$ and $K_{\bY}$. To alleviate notations,
we introduce two functions $g$ and $f$ defined respectively on the
cone of positive semidefinite matrices $\PsdN$ and on $(\PsdN)^{2}$:
\begin{equation}
g:\, Q\mapsto\log\,\abs{Q+\Delta^{-1}},\quad f:\,(Q,R)\mapsto(1-\alpha)g(Q)+\alpha g(R).\label{eq:definitionsFandG}\end{equation}
 Using these notations, we have 
\[
\varphi(\bx,\bx')=C_{n,n'}+f(K_{\bX},K_{\bX}+K_{\bY}),\]
which highlights the connection between $\varphi(\bx,\bx')$ and the
Gram matrices $K_{\bX}$ and $K_{\bX}+K_{\bY}$. In the context of
kernel methods, the natural question brought forward by this reformulation
is whether the linear dot-product matrices $K_{\bX}$ and $K_{\bY}$
in $\varphi$ or $k$ can be replaced by arbitrary kernel matrices
$\bK_{\bX}$ and $\bK_{\bY}$ between the vectors in $\RR^{pd}$ and
$\RR^{d}$ enumerated in $\bX$ and $\bY$, and the resulting quantity
still be a valid positive definite kernel between $\bx$ and $\bx'$.
More generally, suppose that $\bx$ and $\bx'$ are time series of
structured objects, graphs for instance. In such a case, can Equation~\eqref{eq:varphi}
be used to define a kernel between time series of graphs $\bx$ and
$\bx'$ by using directly Gram matrices that measure the similarities
between graphs observed in $\bx$ and $\bx'$? We prove here this
is possible.

Let us redefine and introduce some notations to establish this result.
Given a $k$-uple of points $\bu=(u_{1},\cdots,u_{k})$ taken in an
arbitrary set $\Ucal$, and a positive kernel $\kappa$ on $\Ucal\times\Ucal$
we write $\bK^{\kappa}(\bu)$ for the $k\times k$ Gram matrix \[
\bK^{\kappa}(\bu)\defeq\begin{bmatrix}\kappa(u_{i},u_{j})\end{bmatrix}_{1\leq i,j\leq k}.\]

For two lists $\bu$ and $\bu'$, we write $\bu\cdot\bu'$ for the
concatenation of $\bu$ and $\bu'$. Recall that an empirical measure
$\mu$ on a measurable set $\Xcal$ is a finite sum of weighted Dirac
masses, $\mu=\sum_{i=1}^{n}t_{i}\delta_{u_{i}}$, where the $u_{i}\in\Xcal$
are the locations and the $t_{i}\in\RR^{+}$ the weights of such masses.
\begin{lemma}\label{lem:igvkern} For two empirical measures $\mu$
and $\mu'$ defined on a set $\Xcal$ by locations $\bu=(u_{1},\cdots,u_{k})$
and $\bu'=(u'_{1},\cdots,u'_{l})$ and weights $\bt=(t_{1},\cdots,t_{k})\in\RR_{+}^{k}$
and $\bt'=(t'_{1},\dots,t'_{l})\in\RR_{+}^{l}$ respectively, the
function \[
\xi:(\mu,\mu')\mapsto\log\,\abs{\bK^{\kappa}(\bu\cdot\bu')\diag(\bt\cdot\bt')+I_{k+l}}\]
 is a negative definite kernel. \end{lemma}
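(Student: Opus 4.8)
The plan is to connect this lemma to the already-established Lemma~\ref{lem:igv}, which shows that $\tau:(\mu,\mu')\mapsto \abs{(\mu+\mu')[xx^T]+I_d}^{-1/2}$ is infinitely divisible positive definite on the semigroup $(\Mcal(\RR^d),+)$. Passing to $-2\log\tau$ turns this into the statement that $(\mu,\mu')\mapsto \log\abs{(\mu+\mu')[xx^T]+I_d}$ is \emph{negative} definite, which is already in the exact shape of the function $\xi$ we must analyze, except that the quantity appearing there is expressed through a Gram matrix $\bK^\kappa(\bu\cdot\bu')\diag(\bt\cdot\bt')$ rather than a second-moment matrix in a fixed Euclidean space.

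First I would reconcile the two expressions. For an empirical measure $\mu=\sum_i t_i\delta_{u_i}$, if we temporarily assume the kernel $\kappa$ is linear on $\RR^d$, i.e.\ $\kappa(u,u')=u^Tu'$, then the second moment is $\mu[xx^T]=\sum_i t_i\, u_iu_i^T = U\diag(\bt)U^T$ where $U=[u_1,\dots,u_k]$. Applying Lemma~\ref{lem:abba} gives $\abs{U\diag(\bt)U^T+I_d}=\abs{U^TU\diag(\bt)+I_k}=\abs{\bK^\kappa(\bu)\diag(\bt)+I_k}$, and for the sum $\mu+\mu'$ the concatenated data $\bu\cdot\bu'$ with weights $\bt\cdot\bt'$ reproduces exactly $\abs{\bK^\kappa(\bu\cdot\bu')\diag(\bt\cdot\bt')+I_{k+l}}$. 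So in the linear case $\xi(\mu,\mu')=\log\abs{(\mu+\mu')[xx^T]+I_d}$, and negative definiteness is immediate from Lemma~\ref{lem:igv} together with the standard fact that $-2\log$ of an infinitely divisible p.d.\ kernel is negative definite~\citep[\S3.2.7]{berg84harmonic}.

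The real content is to remove the assumption that $\kappa$ is a linear dot-product. The plan is to invoke the Moore--Aronszajn / feature-map representation: any positive kernel $\kappa$ on $\Ucal\times\Ucal$ can be written $\kappa(u,u')=\langle\phi(u),\phi(u')\rangle_{\Hcal_\kappa}$ for a feature map $\phi$ into its RKHS $\Hcal_\kappa$, possibly infinite-dimensional. Then $\bK^\kappa(\bu)=\Phi^T\Phi$ where $\Phi=[\phi(u_1),\dots,\phi(u_k)]$, and each empirical measure $\mu$ on $\Ucal$ pushes forward under $\phi$ to an empirical measure $\phi_\#\mu=\sum_i t_i\delta_{\phi(u_i)}$ on $\Hcal_\kappa$, whose second moment is $(\phi_\#\mu)[xx^T]=\Phi\diag(\bt)\Phi^T$. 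The Sylvester identity of Lemma~\ref{lem:abba} again equates $\abs{(\phi_\#\mu+\phi_\#\mu')[xx^T]+I}$ with $\abs{\bK^\kappa(\bu\cdot\bu')\diag(\bt\cdot\bt')+I_{k+l}}$, so $\xi$ is the pullback of the already-established negative definite kernel of Lemma~\ref{lem:igv} under the map $\mu\mapsto\phi_\#\mu$, and negative definiteness is preserved under such pullbacks (a negative definite kernel composed with any map of the index set remains negative definite).

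The main obstacle I expect is making Lemma~\ref{lem:igv} legitimately apply when $\Hcal_\kappa$ is infinite-dimensional, since that lemma was stated and proved for measures on $\RR^d$ with a finite $d$. Two routes handle this. The cleanest is to note that for \emph{fixed} finite lists $\bu$ and $\bu'$, only the finite-dimensional subspace $\mathrm{span}\{\phi(u_i),\phi(u'_j)\}\subseteq\Hcal_\kappa$ is ever used, so one may restrict to an orthonormal basis of that subspace of dimension $r\le k+l$, reducing everything to $\RR^r$; crucially the determinant $\abs{\bK^\kappa(\bu\cdot\bu')\diag(\bt\cdot\bt')+I_{k+l}}$ is unchanged by this restriction because the two Sylvester-related determinants agree regardless of the ambient dimension. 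Since negative definiteness is a condition checked on arbitrary \emph{finite} collections of measures, and any such finite collection involves only finitely many locations spanning a finite-dimensional subspace, the finite-dimensional case of Lemma~\ref{lem:igv} suffices and no genuine infinite-dimensional Gaussian integral is required. I would present the argument through this finite-rank reduction to keep it rigorous and self-contained.
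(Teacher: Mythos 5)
Your proposal is correct and follows essentially the same route as the paper: the paper's proof also fixes a finite collection of measures, maps their locations into the RKHS of $\kappa$ via $u\mapsto\kappa(u,\cdot)$, restricts to the finite-dimensional span of those images so that $\xi$ becomes the $\log$-determinant kernel of Lemma~\ref{lem:igv} applied to the pushforward measures, and concludes by the negative definiteness of $-\log\tau$. Your explicit Sylvester-identity reconciliation of the Gram-matrix and second-moment formulations is a detail the paper leaves implicit, but the argument is the same.
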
 \begin{proof} We follow
the approach of the proof of~\citet[Theorem 7]{cuturi05semigroup}.
Consider $m$ measures $\mu_{1},\cdots,\mu_{m}$ and $m$ real weights
$c_{1},\cdots,c_{m}$ such that $\sum_{i=1}^{m}c_{i}=0$. We prove
that the quantity \begin{equation}
\sum_{i,j=1}^{m}c_{i}c_{j}\xi(\mu_{i},\mu_{j}),\label{eq:ndef}\end{equation}
 is necessarily non-positive. Consider the finite set $\Scal$ of
all locations in $\Xcal$ enumerated in all measures $\mu_{i}$. For
each point $u$ in $\Scal$, we consider the function $\kappa(u,\cdot)$
in the reproducing kernel Hilbert space $\Hcal$ of functions defined
by $\kappa$. Let $\Hcal_{\Scal}\defeq\spa\{\kappa(u),u\in\Scal\}$
be the finite dimensional subspace of $\Hcal$ spanned by all images
in $\Hcal$ of elements of $\Scal$ by this mapping. For each empirical
measures $\mu_{i}$ we consider its counterpart $\nu_{i}$, the empirical
measure in $\Hcal_{\Scal}$ with the same weights and locations defined
as the mapped locations of $\mu_{i}$ in $\Hcal_{\Scal}$. Since for
two points $u_{1},u_{2}$ in $\Scal$ we have by the reproducing property
that $\langle\kappa(u_{1},\cdot),\kappa(u_{2},\cdot)\rangle=\kappa(u_{1},u_{2})$,
we obtain that $\xi(\mu_{i},\mu_{j})=-\frac{1}{2}\log\tau(\nu_{i},\nu_{j})$
where $\tau$ is in this case cast as a positive definite kernel on
the Euclidean space $\Hcal_{\Scal}$. Hence the left hand side of
Equation~\eqref{eq:ndef} is nonnegative by negative definiteness
of $-\frac{1}{2}\log\tau$. \end{proof}

We now consider two time series $\bx$ and $\bx'$ taking values in
an arbitrary space $\Xcal$. For any sequence $\bx=(x_{1},\cdots,x_{n})$
we write $\bx_{i}^{j}$ where $1\leq i<j\leq n$ for the sequence
$(x_{i},x_{i+1},\cdots,x_{j})$. To summarize the transitions enumerated
in $\bx$ and $\bx'$ we consider the sequences of subsequences \[
X=(\bx_{1}^{p},\bx_{2}^{p+1},\cdots,\bx_{n-p+1}^{n-1}),\quad X'=({\bx'}_{1}^{p},{\bx'}_{2}^{p+1},\cdots,{\bx'}_{n'-p+1}^{n'-1}),\]
 and \[
Y=(x_{p+1},\cdots,x_{n}),\quad Y'=(x'_{p+1},\cdots,x'_{n'}).\]
 Considering now a p.d. kernel $\kappa_{1}$ on $\Xcal^{p}$ and $\kappa_{2}$
on $\Xcal$ we can build Gram matrices, \[
\bK_{1}=\bK^{\kappa_{1}}(X\cdot X'),\quad\bK_{2}=\bK^{\kappa_{2}}(Y\cdot Y').\]

\begin{theorem}\label{theo:kARk} Given two time series $\bx,\bx'$
in $\Xcal^{\NN}$, the autoregressive negative definite kernel $\varphi_{\kappa}$
of order $p$, parameter $0<\alpha\leq1$ and base kernels $\kappa_{1}$
and $\kappa_{2}$ defined as 
\[
\varphi_{\kappa}(\textup{\bx,\bx'})=C_{n,n'}+f(\bK_{1},\bK_{1}+\bK_{2}),\]
is negative definite. \end{theorem}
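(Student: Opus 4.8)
The plan is to express $\varphi_\kappa$ as a nonnegative combination of kernels already known to be negative definite, namely the kernel $\xi$ of Lemma~\ref{lem:igvkern} evaluated on two natural empirical-measure representations of $\bx$ and $\bx'$, together with a constant of ``separable'' form. I will use three standard facts without further comment: a kernel $(\bx,\bx')\mapsto h(\bx)+h(\bx')$ is negative definite; the pullback of a negative definite kernel through an arbitrary map is negative definite; and the cone of negative definite kernels is closed under nonnegative linear combinations. It therefore suffices to reduce each of the two terms $g(\bK_1)$ and $g(\bK_1+\bK_2)$ appearing in $f(\bK_1,\bK_1+\bK_2)$ to the canonical form of Lemma~\ref{lem:igvkern}. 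For the determinantal reduction common to both, I would write, for any positive semidefinite $Q\in\RR^{N\times N}$, the identity $Q+\Delta^{-1}=\Delta^{-1}(\Delta Q+I_N)$ and apply Lemma~\ref{lem:abba} to get $\abs{\Delta Q+I_N}=\abs{Q\Delta+I_N}$, so that $g(Q)=\log\abs{\Delta^{-1}}+\log\abs{Q\Delta+I_N}$. The prefactor $\log\abs{\Delta^{-1}}=(n-p)\log(2(n-p))+(n'-p)\log(2(n'-p))$, like the explicit $C_{n,n'}$, is of separable form $h(\bx)+h(\bx')$ and hence harmless; the active term $\log\abs{Q\Delta+I_N}$ is exactly the integrand of Lemma~\ref{lem:igvkern}. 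Taking $Q=\bK_1=\bK^{\kappa_1}(X\cdot X')$ and letting $\mu_{\bx}$ be the empirical measure on $\Xcal^p$ with locations the subsequences in $X$ and weights the corresponding $\Delta$-entries, Lemma~\ref{lem:igvkern} applied with $\kappa=\kappa_1$ shows $\log\abs{\bK_1\Delta+I_N}=\xi(\mu_{\bx},\mu_{\bx'})$ is negative definite, and composing with $\bx\mapsto\mu_{\bx}$ keeps it negative definite on $\Xcal^{\NN}$.

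The step I expect to require the most care is the second term $g(\bK_1+\bK_2)$, where $\bK_1+\bK_2$ must itself be exhibited as a single Gram matrix so that Lemma~\ref{lem:igvkern} applies verbatim. Since $\bK_1$ and $\bK_2$ are both $N\times N$ and indexed consistently by the same transitions, the $i$-th transition carrying a subsequence $s_i\in\Xcal^p$ and an output $y_i\in\Xcal$, I would set $z_i\defeq(s_i,y_i)\in\Xcal^p\times\Xcal$ and introduce $\kappa_{12}((s,y),(s',y'))\defeq\kappa_1(s,s')+\kappa_2(y,y')$. This $\kappa_{12}$ is positive definite as a sum of pullbacks of positive definite kernels, and by construction $(\bK_1+\bK_2)_{ij}=\kappa_{12}(z_i,z_j)$, i.e. $\bK_1+\bK_2=\bK^{\kappa_{12}}(Z\cdot Z')$ with $Z,Z'$ enumerating the pairs $z_i$ for $\bx,\bx'$. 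Applying the reduction above with $Q=\bK_1+\bK_2$ and the empirical measures $\nu_{\bx},\nu_{\bx'}$ on $\Xcal^p\times\Xcal$ (locations $z_i$, same $\Delta$-weights), Lemma~\ref{lem:igvkern} with $\kappa=\kappa_{12}$ yields that $\log\abs{(\bK_1+\bK_2)\Delta+I_N}=\xi(\nu_{\bx},\nu_{\bx'})$ is negative definite.

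Collecting the pieces, $\varphi_\kappa(\bx,\bx')=C_{n,n'}+f(\bK_1,\bK_1+\bK_2)$ reduces to a sum of separable constants plus $(1-\alpha)\,\xi(\mu_{\bx},\mu_{\bx'})+\alpha\,\xi(\nu_{\bx},\nu_{\bx'})$. Since $0<\alpha\le1$, both $1-\alpha$ and $\alpha$ are nonnegative, so $\varphi_\kappa$ is a nonnegative combination of negative definite kernels together with a separable constant, hence negative definite. The only genuinely nontrivial ingredients are the positive definiteness of $\kappa_{12}$ and the index-matching that lets $\bK_1+\bK_2$ be read as one Gram matrix; the remainder is bookkeeping around the determinant identity of Lemma~\ref{lem:abba} and the negative definiteness supplied by Lemma~\ref{lem:igvkern}.
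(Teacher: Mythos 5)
Your proof is correct and follows essentially the same route as the paper's: decompose $\varphi_{\kappa}$ into a separable term plus $(1-\alpha)$ and $\alpha$ times the negative definite kernel $\xi$ of Lemma~\ref{lem:igvkern}, applied with $\kappa_{1}$ on measures over $\Xcal^{p}$ and with $\kappa_{1}+\kappa_{2}$ on measures over $\Xcal^{p}\times\Xcal$ respectively. You merely make explicit two steps the paper leaves implicit, namely the determinant identity converting $\log\,\abs{Q+\Delta^{-1}}$ into the $\log\,\abs{Q\Delta+I_{N}}$ form required by $\xi$ (the discrepancy being a separable, hence harmless, constant) and the verification that $\bK_{1}+\bK_{2}$ is itself the Gram matrix of the positive definite sum kernel $\kappa_{1}+\kappa_{2}$ on the paired locations.
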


\begin{proof} $\varphi_{\kappa}$ is negative definite as the sum
of three negative definite kernels: $C_{n,n'}$ is an additive function
in $n$ and $n'$ and is thus trivially negative definite. The term
$(1-\alpha)\log\,\abs{K_{\bX}+\Delta^{-1}}$ can be cast as
$(1-\alpha)$ times the negative definite kernel $\xi$ defined on
measures of $\Xcal^{p}$ with kernel $\kappa_{1}$ while the term
$\alpha\log\,\abs{K_{\bX}+K_{\bY}+\Delta^{-1}}$ is $\alpha$
times the negative definite kernel $\xi$ defined on measures of $\Xcal^{p}\times\Xcal$
with kernel $\kappa_{1}+\kappa_{2}$. \end{proof}


\subsection{Approximations Using Low-Rank Factorizations}

\label{subsec:bounds} 
We consider in this section matrix factorization techniques to approximate
the kernel matrices $\bK_{1}$ and $\bK_{1}+\bK_{2}$ used to compute
$\varphi_{\kappa}(\bx,\bx')$ by low rank matrices. Theorem~\ref{theo:bounds}
provides a useful tool to control the tradeoff between the accuracy
and the computational speed of this approximation.

\subsubsection{Computing $f$ using low-rank matrices}

Consider an $N\times m_{1}$ matrix $\bg_{1}$ and an $N\times m_{2}$
matrix $\bg_{2}$ such that $\bG_{1}\defeq\bg_{1}\bg_{1}^{T}$ approximates
$\bK_{1}$ and $\bG_{2}\defeq\bg_{2}\bg_{2}^{T}$ approximates $\bK_{1}+\bK_{2}$.
Namely, such that the Frobenius norms of the differences \[
\ve_{1}\defeq\bK_{1}-\bG_{1},\quad\ve_{2}\defeq\bK_{1}+\bK_{2}-\bG_{2},\]
 are small, where the Frobenius norm of a matrix $M$ is $\norm{M}\defeq\sqrt{\tr M^{T}M}$.

Computing $f(\bG_{1},\bG_{2})$ requires an order of $O(N(m_{1}+m_{2})^{2}+m_{1}^{3}+m_{2}^{3})$
operations. Techniques to obtain such matrices $\bg_{1}$ and $\bg_{2}$
range from standard truncated eigenvalue decompositions, such as the
power method, to incomplete Cholesky decompositions~\citep{fine2002efficient,bach2005icml}
and Nyström methods~\citep{Williams01usingthe,Drineas2005} which
are arguably the most popular in the kernel methods literature. The
analysis we propose below is valid for any factorization method.

\begin{proposition}\label{prop:convexity} Let $0\leq\alpha\leq1$,
then $f$ defined in Equation \eqref{eq:definitionsFandG} is a strictly
concave function of\, $(\PsdN)^{2}$ which is strictly increasing
in the sense that $f(Q_{1},R_{1})<f(Q_{2},R_{2})$ if $Q_{2}\succ Q_{1}$
and $R_{2}\succ R_{1}$. \end{proposition}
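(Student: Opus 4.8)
The plan is to reduce everything to a single scalar-valued function of one positive semidefinite matrix, namely $g:Q\mapsto\log\abs{Q+\Delta^{-1}}$, and to prove that $g$ is strictly concave and strictly increasing on $\PsdN$. Indeed, $f(Q,R)=(1-\alpha)g(Q)+\alpha g(R)$ is a convex combination (the weights $1-\alpha$ and $\alpha$ are nonnegative and sum to one) of $g$ evaluated at the two \emph{decoupled} arguments $Q$ and $R$. Concavity and monotonicity therefore pass immediately from $g$ to $f$, and for $0<\alpha<1$ strictness in the joint variable $(Q,R)$ follows because perturbing either block strictly changes the corresponding term. I would note that the endpoints $\alpha\in\{0,1\}$ are degenerate (there $f$ ignores one of its two arguments), so the strict statements are to be read on the open interval $0<\alpha<1$.

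To handle $g$ itself, I would first observe that the map $Q\mapsto Q+\Delta^{-1}$ is an injective affine map sending $\PsdN$ into the open cone $\PdN$, because $\Delta^{-1}\succ0$. Since an injective affine reparametrization preserves both strict concavity and strict monotonicity, it suffices to prove that $\log\abs{\cdot}$ is strictly concave and strictly increasing on $\PdN$. For strict concavity I would restrict to an arbitrary segment: fixing distinct $A,B\in\PdN$ and setting $M(s)\defeq A+s(B-A)$, a direct differentiation of $\phi(s)\defeq\log\abs{M(s)}$ gives
\[
\phi'(s)=\tr\!\big(M(s)^{-1}(B-A)\big),\qquad \phi''(s)=-\,\norm{M(s)^{-1/2}(B-A)M(s)^{-1/2}}^{2}<0,
\]
where the strict inequality uses $B-A\neq0$ together with the invertibility of $M(s)$. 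Hence $\phi$ is strictly concave on $[0,1]$ for every choice of distinct endpoints, which is exactly strict concavity of $\log\abs{\cdot}$.

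For monotonicity, if $Q_2\succ Q_1$ then $Q_2+\Delta^{-1}\succ Q_1+\Delta^{-1}\succ0$, and I would use that $\abs{\cdot}$ is strictly increasing on $\PdN$ in the Loewner order: writing $B\succ A\succ0$, the ratio $\abs{B}/\abs{A}=\abs{A^{-1/2}BA^{-1/2}}$ is the product of the eigenvalues of $A^{-1/2}BA^{-1/2}$, each of which exceeds $1$ because $A^{-1/2}BA^{-1/2}\succ I_N$. Thus $g(Q_2)>g(Q_1)$, and combining this with the analogous statement in $R$ yields $f(Q_2,R_2)>f(Q_1,R_1)$ whenever $Q_2\succ Q_1$, $R_2\succ R_1$ and $0<\alpha<1$.

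The only genuinely substantive step is the Hessian computation establishing strict concavity of $\log\abs{\cdot}$; everything else is bookkeeping. The main thing to be careful about is keeping the word \emph{strict} honest: the $-\norm{\cdot}^{2}$ formula is strictly negative precisely because $B-A\neq0$, and joint strictness of $f$ genuinely requires $\alpha\in(0,1)$, so I would flag the degenerate endpoints rather than claim more than holds.
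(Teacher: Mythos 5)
Your proof is correct, and it follows the same basic strategy as the paper---differentiate $g:Q\mapsto\log\abs{Q+\Delta^{-1}}$ twice---but the two arguments diverge in how they certify strictness, and your version is actually the more careful one. The paper computes the gradient $\nabla g(Q)=(Q+\Delta^{-1})^{-1}\succ0$ to get monotonicity, then writes the Hessian as the bilinear form $(\ve,\nu)\mapsto-\tr\,(Q+\Delta^{-1})^{-1}\ve(Q+\Delta^{-1})^{-1}\nu$ and checks its negativity only for \emph{positive definite} perturbations $\ve$, via $\tr UVUV=\tr((\sqrt{U}V\sqrt{U})^{2})>0$ for $U,V\succ0$. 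Strict concavity on the cone requires negativity of $\phi''$ along every segment, i.e.\ for directions $\ve=B-A$ with $A,B\in\PsdN$, and such differences need not be positive definite; your segment computation $\phi''(s)=-\norm{M(s)^{-1/2}(B-A)M(s)^{-1/2}}^{2}<0$ handles an arbitrary nonzero symmetric direction and thus closes this small gap (the paper's identity does extend to symmetric nonzero $V$, since $\sqrt{U}V\sqrt{U}$ is symmetric and nonzero, but the proof as written does not say so). You are also right to flag that joint strict concavity of $f$ in $(Q,R)$ genuinely fails at $\alpha\in\{0,1\}$, where $f$ ignores one argument---the proposition's statement is slightly too generous at the endpoints, though only non-strict concavity is needed where the result is used (the first-order bound in Theorem~\ref{theo:bounds}), and strict monotonicity in the stated sense does survive at the endpoints. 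Your eigenvalue argument for monotonicity of $\log\abs{\cdot}$ in the Loewner order is a fine substitute for the paper's gradient-positivity argument; both are standard and correct.
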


\begin{proof} The gradient of $g:\, Q\mapsto\log\,\abs{Q+\Delta^{-1}}$
is $\nabla g(Q)=(Q+\Delta^{-1})^{-1}$ which is thus a positive definite
matrix. As a consequence, $f$ is a strictly increasing function.
The Jacobian of this gradient evaluated at $Q$ is the linear map
$\ve\in\psd\mapsto-\tr(Q+\Delta{-1})^{-1}\ve(Q+\Delta^{-1})^{-1}$.
For any matrix $C\succ0$ the Hessian of $g$ computed at $Q$ is
thus the quadratic form \[
\nabla^{2}g(Q):(\ve,\nu)\rightarrow-\tr(Q+\Delta^{-1})^{-1}\ve(Q+\Delta^{-1})^{-1}\nu\]
 Since $\tr UVUV=\tr((\sqrt{U}V\sqrt{U})^{2})>0$ for any two matrices
$U,V\succ0$, $\nabla^{2}g(Q)(\ve,\ve)$ is negative for any positive
definite matrix $\ve$. Hence the Hessian of $f$ is minus a positive
definite quadratic form on $(\PsdN)^{2}$ and thus $f$ is strictly
concave. \end{proof}

We use a first order argument to bound the difference between the
approximation and the true value of $f(\bK_{1},\bK_{1}+\bK_{2})$
using terms in $\norm{\ve_{1}}$ and $\norm{\ve_{2}}$:

\begin{multline*}
f(\bK_{1},\bK_{1}+\bK_{2})-(1-\alpha)\langle\nabla g(\bG_{1}),\ve_{1}\rangle-\alpha\langle\nabla g(\bG_{2}),\ve_{2}\rangle\\
\leq f(\bG_{1},\bG_{2})\leq f(\bK_{1},\bK_{1}+\bK_{2}).\end{multline*}

\begin{theorem}\label{theo:bounds} Given two time series $\textup{\bx,\bx'}$,
for any low rank approximations $\bG_{1}$ and $\bG_{2}$ in $\PsdN$
such that $\bG_{1}\preceq\bK_{1}$ and $\bG_{2}\preceq\bK_{1}+\bK_{2}$
we have that \[
e^{-\varphi_{\kappa}(\textup{\bx,\bx'})}\leq e^{-C_{n,n'}-f(\bG_{1},\bG_{2})}\leq(1+\rho)e^{-\varphi_{\kappa}(\textup{\bx,\bx'})},\]
 where $\rho\defeq\exp\left((1-\alpha)\norm{\nabla g(\bG_{1})}\norm{\ve_{1}}+\alpha\norm{\nabla g(\bG_{2})}\norm{\ve_{2}}\right)-1$.

\end{theorem}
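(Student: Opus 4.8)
The plan is to reduce everything to the first-order two-sided bound on $f$ displayed immediately before the statement, and then exponentiate. Write $a\defeq f(\bK_1,\bK_1+\bK_2)$ for the exact value and $b\defeq f(\bG_1,\bG_2)$ for the low-rank surrogate (both scalars). After taking $-\log$ and adding $C_{n,n'}$, the two asserted inequalities are equivalent to $b\le a$ (the left inequality of the theorem) and $a-b\le(1-\alpha)\norm{\nabla g(\bG_1)}\norm{\ve_1}+\alpha\norm{\nabla g(\bG_2)}\norm{\ve_2}$ (the right inequality, since $1+\rho$ is exactly the exponential of this right-hand side). So it suffices to sandwich $b$ between $a$ and $a$ minus the correction term; both halves are supplied by Proposition~\ref{prop:convexity}.

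First I would establish the upper bound $b\le a$, i.e. $f(\bG_1,\bG_2)\le f(\bK_1,\bK_1+\bK_2)$. This is immediate from the monotonicity half of Proposition~\ref{prop:convexity}: the hypotheses $\bG_1\preceq\bK_1$ and $\bG_2\preceq\bK_1+\bK_2$ place the surrogate below the exact arguments in the Loewner order, and $f$ is increasing for $0\le\alpha\le1$. Exponentiating $-b\ge-a$ gives $e^{-C_{n,n'}-b}\ge e^{-C_{n,n'}-a}=e^{-\varphi_{\kappa}(\bx,\bx')}$, which is the left inequality.

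Next I would treat the lower bound on $b$ via concavity. Since $f$ is concave with gradient $\nabla f(Q,R)=((1-\alpha)\nabla g(Q),\,\alpha\nabla g(R))$ and $\nabla g(Q)=(Q+\Delta^{-1})^{-1}$, the tangent at the surrogate point dominates $f$ everywhere; evaluated at $(\bK_1,\bK_1+\bK_2)$ this reads
\[
a \le b + (1-\alpha)\langle \nabla g(\bG_1),\ve_1\rangle + \alpha\langle \nabla g(\bG_2),\ve_2\rangle,
\]
with $\ve_1=\bK_1-\bG_1$ and $\ve_2=\bK_1+\bK_2-\bG_2$. Rearranging and exponentiating gives $e^{-C_{n,n'}-b}\le e^{-\varphi_{\kappa}(\bx,\bx')}\,\exp\!\big((1-\alpha)\langle \nabla g(\bG_1),\ve_1\rangle+\alpha\langle \nabla g(\bG_2),\ve_2\rangle\big)$. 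Finally I would apply the Cauchy--Schwarz inequality for the Frobenius inner product, $\langle \nabla g(\bG_i),\ve_i\rangle\le\norm{\nabla g(\bG_i)}\,\norm{\ve_i}$, to replace the inner products by the norm products appearing in $\rho$; this is licit because $\nabla g(\bG_i)\succ0$ and $\ve_i\succeq0$, so the inner products are in fact nonnegative. Substituting produces the factor $1+\rho$ and the right inequality.

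I do not anticipate a genuine obstacle, since the two structural facts — strict concavity and monotonicity of $f$ — are already furnished by Proposition~\ref{prop:convexity}. The only points requiring care are directional: the tangent bound must be anchored at the surrogate $(\bG_1,\bG_2)$ rather than at $(\bK_1,\bK_1+\bK_2)$, as this is what makes the tangent lie above $f$ and hence yields an \emph{upper} bound on $a$; and the PSD ordering hypotheses $\bG_1\preceq\bK_1$, $\bG_2\preceq\bK_1+\bK_2$ are used twice — once for monotonicity and once to guarantee $\ve_1,\ve_2\succeq0$, so that Cauchy--Schwarz produces a nonnegative, correctly signed error term.
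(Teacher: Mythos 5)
Your argument is correct and is essentially the paper's own proof, made explicit: the paper's one-line proof (``immediate given that $f$ is concave and increasing'') relies on exactly the two-sided first-order bound displayed before the theorem, which you derive via monotonicity of $f$ for the left inequality and the concavity tangent bound anchored at $(\bG_1,\bG_2)$ plus Cauchy--Schwarz for the right. The only cosmetic difference is that you spell out the exponentiation and the sign considerations that the paper leaves implicit.
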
 \emph{Proof.} Immediate given that $f$ is concave
and increasing $\blacksquare$.

\subsubsection{Early stopping criterion}

Incomplete Cholesky decomposition and Nyström methods can build iteratively
a series of matrices $\bg_{1,t}$ and $\bg_{2,t}\in\RR^{N\times t},1\leq t\leq N$
such that $\bG_{1,t}\defeq\bg_{1,t}\bg_{1,t}^{T}$ and $\bG_{2,t}\defeq\bg_{2,t}\bg_{2,t}^{T}$
increase respectively towards $\bK_{1}$ and $\bK_{1}+\bK_{2}$ as
$t$ goes to $N$. The series $\bg_{1,t}$ and $\bg_{2,t}$ can be
obtained without having to compute explicitly the whole of $\bK_{1}$
nor $\bK_{1}+\bK_{2}$ except for their diagonal.

The iterative computations of $\bG_{1,t}$ and $\bG_{2,t}$ can be
halted whenever an upper bound for each of the norms $\norm{\ve_{1,t}}$
and $\norm{\ve_{2,t}}$ of the residues $\ve_{1,t}\defeq\bK_{1}-\bG_{1,t}$
and $\ve_{2,t}\defeq\bK_{1}+\bK_{2}-\bG_{2,t}$ goes below an approximation
threshold.

Theorem~\ref{theo:bounds} can be used to produce such a stopping
criterion by a rule which combines an upper bound on $\norm{\ve_{1,t}}$
and $\norm{\ve_{2,t}}$ and the exact norm of the gradients of $g$
at $\bG_{1,t}$ and $\bG_{2,t}$. This would require computing Frobenius
norms of the matrices $(\bG_{i,t}+\Delta^{-1})^{-1}$, $i=1,2$. These
matrices can be updated iteratively using rank-one updates. A simpler
alternative which we consider is to bound $\norm{\nabla g(\bG_{i,t})}$
uniformly between $0$ and $\bK$ using the inequality \[
(\bG_{i,t}+\Delta^{-1})^{-1}\preceq\Delta,\, i=1,2.\]
 which yields the following bound: 
\[
e^{-\varphi_{\kappa}(\bx,\bx')}\leq e^{-C_{n,n'}-f(\bG_{1},\bG_{2})}\leq e^{-\varphi_{\kappa}(\bx,\bx')}e^{\tfrac{1}{2}\sqrt{\frac{N}{(n-p)(n'-p)}}\left((1-\alpha)\norm{\ve_{1}}+\alpha\norm{\ve_{2}}\right)}\]
We consider in the experimental section the positive definite kernel
$e^{-t\varphi_{\kappa}}$, that is the scaled exponentiation of $\varphi_{\kappa}$
multiplied by a bandwidth parameter $t>0$. Setting a target tolerance
$\sigma>0$ on the ratio between the approximation of $e^{-t\varphi_{\kappa}}$
and its true value, namely requiring that

\[
e^{-t\varphi_{\kappa}(\bx,\bx'))}\leq e^{-t\left(C_{n,n'}+f(\bG_{1},\bG_{2})\right)}\leq(1+\sigma)e^{-t\varphi_{\kappa}(\bx,\bx'))},\]
can be ensured by stopping the factorizations at an iteration $t$
such that \[
(1-\alpha)\norm{\ve_{1,t}}+\alpha\norm{\ve_{2,t}}\leq\frac{2\log(1+\tau)}{t}\sqrt{\frac{(n-p)(n'-p)}{N}}.\]
 which we simplify to performing the factorizations separately, and
stopping at the lowest iterations $t_{1}$ and $t_{2}$ such that
\begin{equation}
\begin{aligned}\norm{\ve_{1,t_{1}}} & \leq\frac{\log(1+\tau)}{(1-\alpha)t}\sqrt{\frac{(n-p)(n'-p)}{N}},\\
\norm{\ve_{2,t_{2}}} & \leq\frac{\log(1+\tau)}{\alpha t}\sqrt{\frac{(n-p)(n'-p)}{N}}.\end{aligned}
\label{eq:ve}\end{equation}
 We provide in Figure~\ref{fig:tradeoff} of Section \ref{subsec:results}
an experimental assessment of this speed/accuracy tradeoff when computing
the value of $\varphi_{\kappa}$.


\section{Experiments}

\label{sec:experiments} 
We provide in this section a fair assessment of the performance and
efficiency of autoregressive kernels on different tasks. We detail
in Section~\ref{subsec:kernels} the different kernels we consider
in this benchmark. Section~\ref{subsec:toy} and~\ref{subsec:real}
introduce the toy and real-life datasets of this benchmark, results
are presented in Section~\ref{subsec:results} before reaching the
conclusion of this paper.

\subsection{Kernels and parameter tuning}

\label{subsec:kernels} The kernels we consider in this experimental
section are all of the form $K=e^{-\frac{1}{t}\Phi}$, where $\Phi$
is a negative definite kernel. We select for each kernel $K$ the
value of the bandwidth $t$ as the median value $\hat{m}_{\Phi}$
of $\Phi$ on all pairs of time series observed in the training fold
times $0.5$, $1$ or $2$, namely $t\in\{.5\,\hat{m}_{\Phi},\,\hat{m}_{\Phi},2\,\hat{m}_{\Phi}\}$.
The selection is based on the cross validation error on the training
fold for each (kernel,dataset) pair. Some kernels described below
bear the superscript ${\cdot\,}^{\kappa}$, which means that they
are parameterized by a base kernel $\kappa$. Given two times series
$\bx=(x_{1},\cdots,x_{n})$ and $\bx'=(x_{1}',\cdots,x_{n'}')$, this
base kernel $\kappa$ is used to computed similarities between single
components $\kappa(x_{i},x'_{j})$ or p-uples of components $\kappa(\left(x_{i+1},\cdots,x_{i+p}\right),(x_{j+1}',\cdots,x_{j+p}'))$.
For all superscripted kernels below, $\kappa$ is set to be the Gaussian
kernel between two vectors $\kappa(x,y)=e^{-\norm{x-y}^{2}/(2\sigma^{2})}$,
where the dimension is obvious from the context and is either $d$
or $pd$. The variance parameter $\sigma^{2}$ is arbitrarily set
to be the median value of all Euclidean distances $\norm{x_{i}^{(r)}-x_{j}^{(s)}}$
where $i\leq|\bx^{(r)}|$, $j\leq|\bx^{(s)}|$, $(r,s)\in\Rcal^{2}$,
where $\Rcal$ is a random subset of $\{1,2,\cdots,\#\text{training points}\}$

\paragraph{Autoregressive kernels}

$k_{\ar},k_{\ar}^{\kappa}$: We consider the kernels \[
k_{\ar}=e^{-t_{\ar}\,\varphi},\quad k_{\ar}^{\kappa}=e^{-t_{\ar}^{\kappa}\,\varphi_{\kappa}},\]
 parameterized by the bandwidths $t_{\ar}$ and $t_{\ar}^{\kappa}$.
We set parameters $\alpha$ and $p$ as $\alpha=1/2$ and $p=5$ in
all experiments. The matrix factorizations used to compute approximations
to $k_{\ar}^{\kappa}$ (with $\tau$ set to $10^{-4}$) can be performed
using the \verb"chol_gauss" routine proposed by~\citet{shen2009fast}.
Since running separately this routine for $\bK_{1}$ and $\bK_{1}+\bK_{2}$
results in duplicate computations of portions of $\bK_{1}$, we have
added our modifications to this routine in order to cache values of
$\bK_{1}$ that can be reused when evaluating $\bK_{1}+\bK_{2}$.
The routine \verb"TwoCholGauss" is available on our website, as well as other pieces of code. We insist
on the fact that, other than $\alpha,p$ and the temperature $t_{\ar}$,
the autoregressive kernel $k_{\ar}$ does not require any parameter
tuning.

\paragraph{Bag of vectors kernel}

$\kBOV$: A time series $(x_{1},\cdots,x_{n})$ can be considered
as a bag of vectors $\{x_{1},\cdots,x_{n}\}$ where the time-dependent
information from each state's timestamp is deliberately ignored. Two
time series $\bx$ and $\bx'$ can be compared through their respective
bags $\{x_{1},\cdots,x_{n}\}$ and $\{x'_{1},\cdots,x'_{n'}\}$ by
setting \[
\psi_{\kappa}(\bx,\bx')\defeq\tfrac{1}{nn'}\sum_{i\leq n,j\leq n'}\kappa(x_{i},x'_{j}),\]
 and defining the kernel $\kBOV(\bx,\bx')=\exp\left(-t_{\text{BoV}}\left(\psi_{\kappa}(\bx,\bx)+\psi_{\kappa}(\bx',\bx')-2\psi_{\kappa}(\bx,\bx'\right)\right)$
where $t_{\text{BoV}}>0$, see for instance~\citep{hein05hilbertian}.
This relatively simple kernel will act as the baseline of our experiments,
both for performance and computational time.

\paragraph{Global alignment kernel}

$\kGA$: The global alignment kernel~\citep{cuturi07kernel} is a
positive definite kernel that builds upon the dynamic time warping
framework, by considering the soft-maximum of the alignment score
of all possible alignments between two time series. We use an implementation
of this kernel distributed on the web, and consider the kernel $\kGA=\exp(-t_{\text{GA}}\text{GlobalAlignment}_{\kappa}(\bx,\bx'))$,
parameterized by the bandwidth $t_{\text{GA}}$. Note that the global
alignment kernel has not been proved to be infinitely divisible. Namely,
$\kGA$ is known to be positive definite for $t_{\text{GA}}=1$, and
as a consequence for $t_{\text{GA}}\in\NN$, but not for all positive
values. However, the Gram matrices that were generated in these experiments
have been found to be positive definite for all values of $t_{\text{GA}}$
as discussed earlier in this section. This suggests that $\kGA$ might
indeed be infinitely divisible.

\paragraph{Splines Smoothing kernel}

$\kS$: ~\citet{kumara2008large} use spline smoothing techniques
to map each time series $(x_{1},\cdots,x_{n})$ onto a multivariate
polynomial function $p_{\bx}$ defined on $[0,1]$. As a pre-processing
step, each time series is mapped onto a multivariate time series of
arbitrary length $\tilde{\bx}$ (set to $200$ in our experiments)
such that $\tilde{\bx}^{T}\tilde{\bx'}$ corresponds to a relevant
dot-product for these polynomials. We have modified an implementation
that we received from the authors in email correspondence. ~\citet{kumara2008large}
consider a linear kernel in their original paper on such representations.
We have found that a Gaussian kernel between these two vector representations
performs better and use $\kS=\exp(-t_{\text{S}}\norm{\tilde{\bx}-\tilde{\bx'}}^{2})$.

\begin{rem}Although promising, the kernel proposed by~\citet[\S4.5]{jebara04probability}
is embryonic and leaves many open questions on its practical implementation.
A simple implementation using VAR models would not work with these
experiments, since for many datasets the dimension $d$ of the considered
time series is comparable to or larger than their lengths' and would
prevent any estimation of the $(pd^{2}+d(d+1)/2)$ parameters of a
$VAR(p)$ model. A more advanced implementation not detailed in the
original paper would be beyond the scope of this work. We have also
tried to implement the fairly complex families of kernels described
by~\citet{vishwanathan2007binet}, namely Equations (10) and (16)
in that reference, but our implementations performed very poorly on
the datasets we considered, and we hence decided not to report these
results out of concerns for the validity of our codes. Despite repeated
attempts, we could not obtain computer codes for these kernels from
the authors, a problem also reported in~\citep{lin2008alignment}.\end{rem}

\subsection{Toy dataset}

\label{subsec:toy} We study the performance of these kernels in a
simple binary classification toy experiment that illustrates some
of the merits of autoregressive kernels. We consider high dimensional
time series ($d=1000$) dimensional of short length ($n=10$) generated
randomly using one of two VAR(1) models, \[
x_{t+1}=A_{i}x_{t}+\varepsilon_{t},\quad i=1,2,\]
 where the process $\varepsilon_{t}$ is a white Gaussian noise with
covariance matrix $.1I_{1000}$. Each time series' initial point is
a random vector whose components are each distributed randomly following
the uniform distribution in $[-5,5]$. The two matrices $A_{i}$,
$i=1,2$, are sparse (10\% of non-zero values, that is 100.000 non
zero entries out of potentially one million) and have entries that
are randomly distributed following a standard Gaussian law%
\footnote{In Matlab notation, $A=\texttt{sprandn(1000,.1)}$%
}. These matrices are divided by their spectral radius to ensure that
their largest eigenvalue has norm smaller than one to ensure their
stationarity.

We draw randomly $10$ time series with transition matrix $A_{1}$
and $10$ times with transition matrix $A_{2}$ and use these $20$
time series as a training set to learn an SVM that can discriminate
between time series of type $1$ or $2$. We draw $100$ test time
series for each class $i=1,2$, that is a total of $200$ time series,
and test the performance of all kernels following the protocol outlined
above. The test error is represented in the leftmost bar plot of Figure~\ref{fig:final}.
The autoregressive kernel $k_{\ar}$ achieves a remarkable test error
of $0$, whereas other kernels, including $k_{\ar}^{\kappa}$, make
a not-so-surprisingly larger number of mistakes, given the difficulty
of this task. One of the strongest appeals of the autoregressive kernel
$k_{\ar}$ is that it manages to quantify a dynamic similarity between
two time series (something that neither the Kumara kernel or any kernel
based on alignments may achieve with so few samples) without resorting
to the actual estimation of a density, which would of course be impossible
given the samples' length.

\subsection{Real-life datasets}

\label{subsec:real} We assess the performance of the kernels proposed
in this paper using different benchmark datasets and other known kernels
for time series. The datasets are all taken from the UCI Machine Learning
repository~\citep{UCI}, except for the PEMS dataset which we have
compiled. The datasets characteristics' are summarized in Table~\ref{tab:dbs}.


\textbf{Japanese Vowels}: The database records utterances by nine
male speakers of two Japanese vowels `a' and `e' successively. Each
utterance is described as a time series of LPC cepstrum coefficients.
The length of each time series lies within a range of $7$ to $29$
observations, each observation being a vector of $\RR^{12}$. The
task is to guess which of the nine speakers pronounces a new utterance
of `a' or `e'. We use the original split proposed by the authors,
namely $270$ utterances for training and $370$ for testing.

\textbf{Libras Movement Data Set}: LIBRAS is the acronym for the brazilian
sign language. The observations are 2-dimensional time series of length
45. Each time series describes the location of the gravity center
of a hand's coordinates in the visual plane. 15 different signs are
considered, the training set has 24 instances of each class, for a
total $360=24\times15$ time series. We consider another dataset of
$585$ time series for the test set.

\textbf{Handwritten characters}: $2858$ recordings of a pen tip trajectory
were taken from the same writer. Each trajectory, a $3\times n$ matrix
where $n$ varies between $60$ and $182$ records the location of
the pen and its tip force. Each trajectory belongs to one out of $20$
different classes. The data is split into $2$ balanced folds of $600$
examples for training and $2258$ examples for testing.

\textbf{Australian Language of Signs}: Sensors are set on the two
hands of a native signer communicating with the AUSLAN sign language.
There are $11$ sensors on each hand and hence $22$ coordinates for
each observation of the time series. The length of each time series
ranges from $45$ to $136$ measurements. A sample of $27$ distinct
recordings is performed for each of the $95$ considered signs, which
totals $2565$ time series. These are split between balanced train
and tests sets of size $600$ and $1865$ respectively. Each time
series in both test and training sets is centered individually, that
is $\bx^{(i)}$ is replaced by $\bx^{(i)}-\bar{\bx}^{(i)}$. Without
such a centering the performance of all kernels is seriously degraded,
except for the autoregressive kernel which remains very competitive
with an error below 10\%.

\textbf{PEMS Database}: We have downloaded 15 months worth of daily
data from the California Department of Transportation PEMS website%
\footnote{\texttt{http://pems.dot.ca.gov}%
}. The data describes measurements at 10 minute intervals of occupancy
rate, between 0 and 1, of different car lanes of the San Francisco
bay area (D04) freeway system. The measurements cover the period from
January 1st 2008 to March 30th 2009. We consider each day of measurements
as a single time series of dimension $963$ (the number of sensors
which functioned consistently throughout the studied period) and length
$6\times24=144$. The task is to classify each day as the correct
day of the week, from Monday to Sunday, e.g. label it with an integer
between 1 and 7. We remove public holidays from the dataset, as well
as two days with anomalies (March 8 2009 and March 9 2008) where all
sensors have been seemingly turned off between 2:00 and 3:00 AM. This
leaves 440 time series in total, which are shuffled and split between
$267$ training observations and $173$ test observations. We plan
to donate this dataset to the UCI repository, and it should be available shortly. In the meantime, the dataset can be accessed in Matlab format on our website.

\begin{table}
\newcolumntype{U}{>{\centering\arraybackslash}X}
\begin{tabularx}{\textwidth}{|l|*{5}{U|}}
\hline 
	\textbf{Database}&$d$& $n$ &\textbf{classes}& \textbf{\# train } & \textbf{\#test}\\\hline\hline
	Toy dataset & 1000 & 10 & 2 & 20 & 200\\\hline\hline
	Japanese Vowels &12&7-29&9&270&370\\
	Libras &2&45&15&360&585\\
	Handwritten Characters&3&60-182&20&600&2258\\
	AUSLAN &22&45-136&95&600&1865\\
	PEMS &963&144&7&267&173 \\\hline	
\end{tabularx}
\caption{Characteristics of the different databases considered in the benchmark test}\label{tab:dbs}
\end{table}

\subsection{Results and computational speed}

\label{subsec:results} The kernels introduced in the Section above
are paired with a standard multiclass SVM implementation using a one-versus-rest
approach. For each kernel and training set pair, the SVM constant
$C$ to be used on the test set was chosen as either $1$, $10$ or
$100$, whichever gave the lowest cross-validation mean-error on the
training fold. We report the test errors in Figure~\ref{fig:final}.
The errors on the test sets can be also compared with the average
computation time per kernel evaluation graph displayed for 4 datasets
in Figure\ref{fig:tradeoff}. In terms of performance, the autoregressive
kernels perform favorably with respect to other kernels, notably the
Global Alignment kernel, which is usually very difficult to beat.
Their computational time offer a diametrically opposed perspective
since for these benchmarks datasets the flexibility of using a kernel
$\kappa$ to encode the inputs in a RKHS does not yield practical
gains in performance but has a tremendously high computational price.
On the contrary, $k_{\ar}$ is both efficient and usually very fast
compared to the other kernels.

\begin{figure}
\hskip-3.5cm \includegraphics[width=22cm]{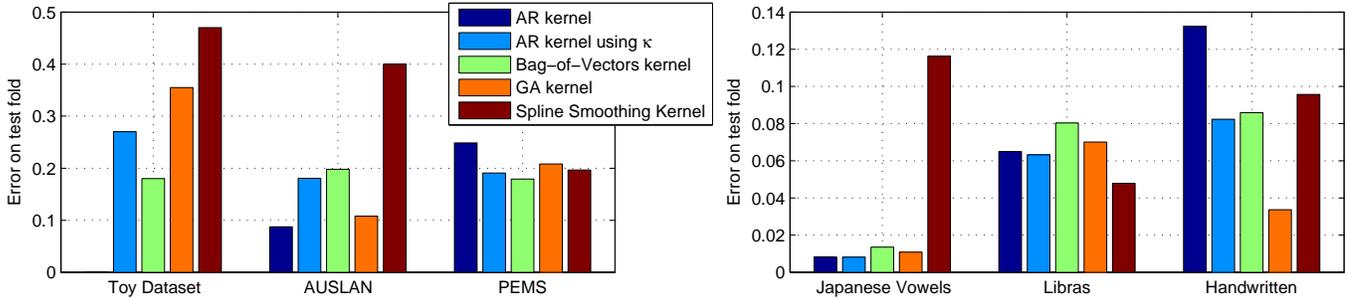} \caption{Test error of the 5 considered kernels on 5 different tasks split
into two panels for better legibility of the error rates (notice the
difference in scale). The AR kernel has a test-error of $0$ on the
toy dataset's test fold.\label{fig:final}}

\end{figure}

\begin{figure}
\hspace{-3cm}\includegraphics[width=21cm]{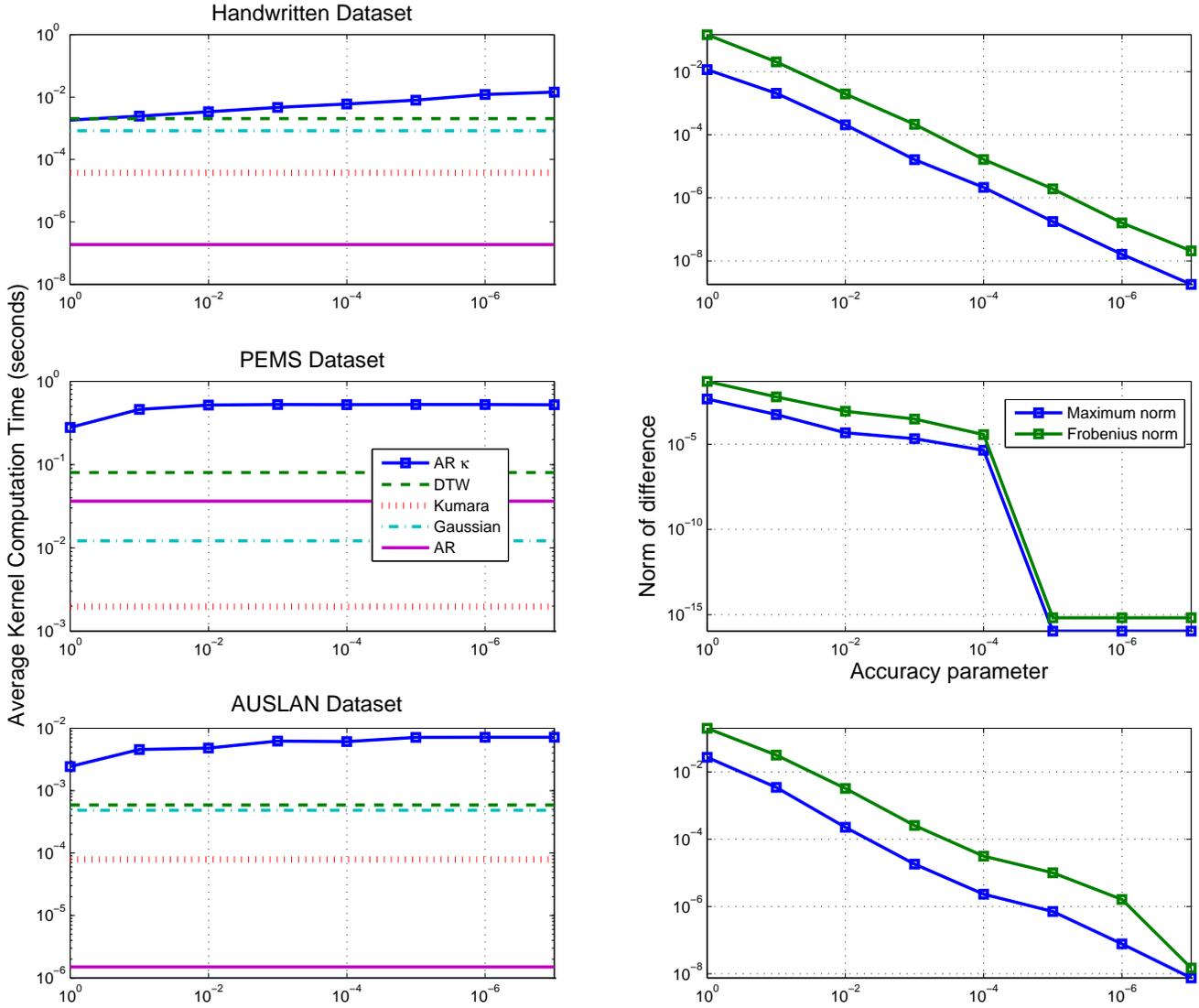} \caption{These graphs provide on the left side the average time needed to compute
one evaluation of each of the 5 kernels on the largest datasets. The
average speeds (computed over a sample of $50\times50$ calculations)
for each kernel are quantified in the y-axis. The x-axis is only effective
for the kernel $k_{\ar}^{\kappa}$ and shows the influence of the
accuracy para meter $\tau$ on that speed using the low-rank matrix
factorization expression used for $\varphi$ as described in Equation~\eqref{eq:ve}.
This parameter is set between $10^{0}$ (poor approximation) to $10^{-7}$
(high accuracy). The accuracy is measured by the maximum norm and
the Frobenius norm of the difference between the two $50\times50$
$\varphi$-Gram matrices. Note that $\kGA$ is fully implemented in
\texttt{mex}-C code, $k_{\ar}^{\kappa}$ uses \texttt{mex}-C subroutines
for Cholesky decomposition while all other kernels are implemented
using standard algebra in Matlab. Preprocessing times are not counted
in these averages. The simulations were run using an iMac 2.66 GhZ
Intel Core with 4Gb of memory.}

\label{fig:tradeoff} 
\end{figure}

\subsection{Conclusion and Discussion}\label{subsec:conclusion}

We have proposed in this work two infinitely divisible kernels $k_{\ar}$ and $k_{\ar}^{\kappa}$ for
time series. These kernels can be used within the framework of kernel
machines, \eg\, the SVM or kernel-PCA, or more generally as Hilbertian
distances by using directly their logarithms $\varphi_{\var}$ and $\varphi_{\var}^{\kappa}$ once properly normalized.
The first kernel $k_{\ar}$ computes a similarity between two multivariate
time series with a low computational cost. This similarity is easy
to implement, easy to tune given its infinite divisibility, and often performs similarly or better than more costly
alternatives. The second kernel, $k_{\ar}^{\kappa}$, is a generalization
of $k_{\ar}$ that can handle structured data by considering a local
kernel $\kappa$ on the structures. Its computation requires the computation
of all or a part of large Gram matrices as well as the determinant
of these. Given its computational drawbacks, the experimental evidence
gathered in this paper is not sufficient to advocate its use on vectorial
data. Moreover,~\citet{el2010spectrum} has recently shown that the
spectrum of a Gram matrix of high-dimensional points using the Gaussian
kernel may, under certain assumptions, be very similar to the spectrum
of the standard Gram matrix of these same points using the linear
dot-product. In such a case, the sophistication brought forward by
$k_{\ar}^{\kappa}$ might be gratuitous and yield similar results
to the direct use of $k_{\ar}$. However, we believe that $k_{\ar}^{\kappa}$
may prove particularly useful when considering time series of structured
data. For instance, we plan to apply the kernel $k_{\ar}^{\kappa}$
to the classification of video segments, where each segment would
be represented as a time varying histogram of features and $\kappa$
a suitable kernel on histograms that can take into account the similarity
of features themselves. 

Our contribution follows the blueprint laid down by~\cite{seeger02covariance} which can be effectively applied to other exponential models. However, we believe that the infinite divisibility of such kernels, which is crucial in practical applications, had not been considered before this work. Our result in this respect is not as general as we would wish for, in the sense that we do not know whether $k_{\ar}$ remains infinitely divisible when
the degrees of freedom $\lambda$ of the inverse Wishart prior exceed
$d-1$. In such a case, the concavity of $f$ in Equation~\eqref{eq:definitionsFandG} would not be given either. 
Finally, although the prior that we use to define $k_{\ar}$ is non informative, it might be of interest
to learn the hyperparameters for these priors based on a data corpus of interest.

\bibliographystyle{abbrvnat} \bibliographystyle{abbrvnat}
\bibliography{bib_short}
 
\end{document}